\documentclass{article}

\PassOptionsToPackage{numbers,compress}{natbib}

\usepackage[preprint]{neurips_2026}

\usepackage{microtype}
\usepackage{graphicx}
\usepackage{subcaption}
\usepackage{booktabs} 
\usepackage{multirow} 
\usepackage{makecell} 
\usepackage[table]{xcolor} 
\definecolor{ourhighlight}{RGB}{230,245,230} 
\newcommand{\hl}{\cellcolor[rgb]{0.92,0.96,0.92}} 
\usepackage{wrapfig}  
\usepackage[breaklinks=true]{hyperref}

\usepackage{amsmath}
\usepackage{amssymb}
\usepackage{amsfonts}
\usepackage{mathtools}
\usepackage{amsthm}
\usepackage{enumitem}
\usepackage{algorithm}
\usepackage{algorithmic}
\usepackage{listings}

\definecolor{codegreen}{rgb}{0.25,0.5,0.35}
\definecolor{codegray}{rgb}{0.5,0.5,0.5}
\definecolor{codepurple}{rgb}{0.58,0,0.82}
\definecolor{backcolour}{rgb}{0.97,0.97,0.97}
\lstdefinestyle{pythonstyle}{
  backgroundcolor=\color{backcolour},
  commentstyle=\color{codegreen},
  keywordstyle=\color{blue},
  numberstyle=\tiny\color{codegray},
  stringstyle=\color{codepurple},
  basicstyle=\ttfamily\small,
  breakatwhitespace=false,
  breaklines=true,
  captionpos=b,
  keepspaces=true,
  numbers=left,
  numbersep=5pt,
  showspaces=false,
  showstringspaces=false,
  showtabs=false,
  tabsize=4,
  language=Python,
  frame=single,
  framerule=0.5pt,
  rulecolor=\color{gray!50},
}

\usepackage[capitalize,noabbrev]{cleveref}

\theoremstyle{plain}
\newtheorem{theorem}{Theorem}[section]
\newtheorem{proposition}[theorem]{Proposition}

\theoremstyle{definition}

\theoremstyle{remark}
\newtheorem{remark}[theorem]{Remark}

\usepackage[textsize=tiny]{todonotes}

\title{VESPO: Variational Sequence-level Soft Policy Optimization for Off-Policy LLM Training}

\author{%
  Guobin Shen
  \quad Chenxiao Zhao
  \quad Xiang Cheng
  \quad Lei Huang
  \quad Xing Yu\thanks{Correspondence to: \texttt{yuanshan2@xiaohongshu.com}, \texttt{floyed\_shen@outlook.com}.} \\
  Xiaohongshu Inc.
}

\begin{document}

\maketitle

\begin{abstract}

  Off-policy updates are inevitable in reinforcement learning (RL) for large language models (LLMs) due to rollout staleness from asynchronous training and mismatches between training and inference engines. Naive importance sampling gives an unbiased correction but suffers from high variance, which is amplified by unbounded ratios and autoregressive generation. Prior remedies either rely on scenario-specific engineering, or trade bias for variance via token-level clipping or sequence-level normalization, yet these approaches remain largely heuristic. We propose \textbf{V}ariational s\textbf{E}quence-level \textbf{S}oft \textbf{P}olicy \textbf{O}ptimization (\textbf{VESPO}). By explicitly incorporating variance reduction into a variational formulation, we derive a principled closed-form reshaping kernel that operates directly on sequence-level importance weights, avoids token-level approximation and length normalization, and admits an explicit variance bound for the deployed kernel. Experiments on math reasoning and code generation show that VESPO maintains stable training under severe off-policy conditions (staleness up to $64\times$) and delivers consistent gains across both dense and Mixture-of-Experts (MoE) models, outperforming recent reshaping baselines under matched setup. Code is available at \url{https://github.com/FloyedShen/VESPO}.

\end{abstract}

\section{Introduction}
\label{sec:intro}
\vspace{-.5em}

\begin{wrapfigure}{r}{0.32\linewidth}
  \centering
  \vspace{-1em}
  \includegraphics[width=\linewidth]{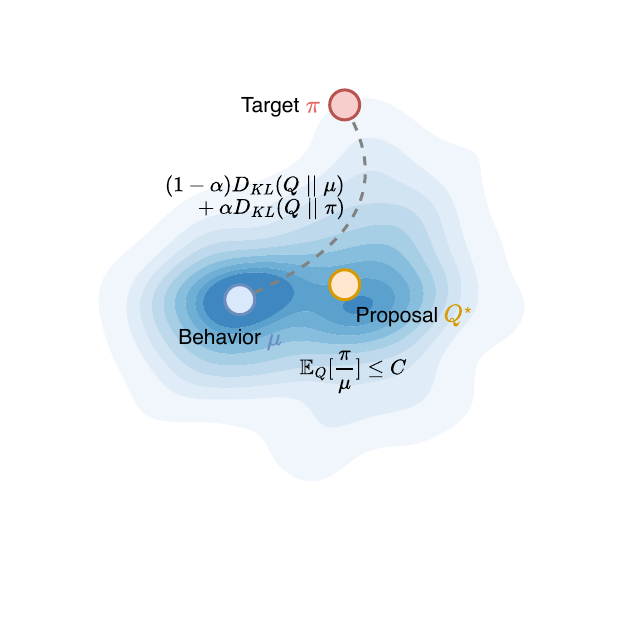}
  \caption{VESPO reformulates IS weight reshaping as finding a proposal $Q^*$ that balances proximity to $\mu$ and $\pi$ under a variance constraint.}
  \label{fig:overview}
  \vspace{-1em}
\end{wrapfigure}
Reinforcement learning (RL) has become a key technique for tackling complex problem-solving tasks with large language models (LLMs), enabling capabilities such as multi-step mathematical reasoning and code generation~\citep{openai2024o1, anthropic2025claude, DeepSeekAI2025DeepSeekR1IR, Yang2025Qwen3TR}.

In practice, off-policy updates arise naturally in RL pipelines for LLMs.
A common source is that systems split large rollout batches into mini-batches for sequential updates~\citep{Yang2025Qwen3TR}, causing later batches to become stale relative to the evolving policy.
Asynchronous systems~\citep{fu2025areal, zhong2025streamrl, noukhovitch2025asynchronous} amplify this by decoupling rollout from training entirely.
Training-inference mismatches introduce further discrepancies, especially in MoE models where routing decisions compound through layers.
To stabilize training under such distribution mismatch, existing works adopt truncated importance sampling (TIS)~\citep{liu-li-2025-rl-collapse}, mask out off-policy samples from training~\citep{hu2025stabilizing}, or replay expert routing for target policy inference~\citep{Zheng2025GroupSP, ma2025stabilizingmoereinforcementlearning}.
While PPO~\citep{Schulman2017ProximalPO} enforces trust region constraints via clipping, it does not directly address the variance challenge of sequence-level importance sampling (IS).

Existing methods address this challenge through various \emph{importance weight transformations}.
Most operate at the token level: GRPO~\citep{shao2024deepseekmath, DeepSeekAI2025DeepSeekR1IR} applies PPO-style clipping to per-token ratios; other methods such as SAPO~\citep{gao2025softadaptivepolicyoptimization} design heuristic transformations for off-policy importance weights~\citep{xi2025bapostabilizingoffpolicyreinforcement, zheng2025prosperity, dwyer2025itsyouitsclipping, roux2025taperedoffpolicyreinforcestable, hu2025reinforcestabilizingcriticfreepolicy}.
However, token-level transformations are a compromise to avoid the multiplicative variance explosion of sequence-level weights, and have been shown to be merely a first-order approximation to their sequence-level counterparts~\citep{Zheng2025GroupSP}.
Sequence-level approaches~\citep{Zheng2025GroupSP, zhao2025geometricmeanpolicyoptimization} introduce length normalization to control variance, but this normalization makes the IS estimator biased; hard clipping is still required on top.
Despite these efforts, principled guidance for designing importance weight transformations remains limited.

We propose \textbf{V}ariational s\textbf{E}quence-level \textbf{S}oft \textbf{P}olicy \textbf{O}ptimization (\textbf{VESPO}), which takes a fundamentally different approach: rather than designing reshaping heuristics, we explicitly incorporate variance reduction for off-policy importance sampling into a variational formulation, yielding a principled closed-form solution (\cref{fig:overview}).
Additionally, we find that the resulting transformation is particularly friendly to sequence-level optimization: unlike previous methods that rely on length normalization to avoid variance explosion, VESPO operates directly on sequence-level importance weights without approximation or normalization.

Our contributions are summarized as follows:
\begin{itemize}[itemsep=-1pt, topsep=-1pt]
  \item We recast importance weight reshaping as a measure change to an implicit proposal distribution, and derive a principled closed-form kernel from a variance-constrained variational objective.

  \item The resulting algorithm, VESPO, operates directly on sequence-level importance weights without length normalization, preserving inter-token dependencies, and admits an explicit variance bound that ensures bounded gradient contributions under arbitrary staleness.

  \item Experiments on math reasoning and code generation demonstrate that VESPO remains stable under staleness up to $64\times$ across dense and MoE architectures, transfers across reward modalities without retuning, and outperforms recent reshaping baselines.
\end{itemize}


\section{Preliminaries}
\label{sec:prelim}
\vspace{-.5em}

\textbf{Notation.}
We consider an autoregressive language model parameterized by $\theta$ as a policy $\pi_\theta$.
Let $x$ denote a query sampled from a dataset $\mathcal{D}$.
In off-policy settings, responses $y = (y_1, \ldots, y_T)$ are sampled from a behavior policy $\mu$ (e.g., an earlier checkpoint or a different inference engine).
The likelihood of generating $y$ given $x$ factorizes as:
\begin{equation}
  \label{eq:autoregressive}
  \pi_\theta(y \mid x) = \prod_{t=1}^{T} \pi_\theta(y_t \mid x, y_{<t}).
\end{equation}
We write $\tau = (x, y)$ for a query-response pair, and $R(\tau) \in \mathbb{R}$ for the sequence-level reward assigned to the complete response.

\textbf{Policy Gradient with Off-Policy Correction.}
The goal is to maximize the expected reward under the current policy:
\begin{equation}
  \label{eq:objective}
  \mathcal{J}(\theta) = \mathbb{E}_{\tau \sim \pi_\theta}\bigl[R(\tau)\bigr].
\end{equation}
When samples are drawn from a behavior policy $\mu$ instead, importance sampling provides an unbiased correction.
Taking the gradient yields the off-policy policy gradient:
\begin{equation}
  \label{eq:off-policy-pg}
  \nabla_\theta \mathcal{J}(\theta) = \mathbb{E}_{\tau \sim \mu}\left[W(\tau) \cdot R(\tau) \cdot \nabla_\theta \log \pi_\theta(\tau)\right],
\end{equation}
where $W(\tau) = \pi_\theta(\tau) / \mu(\tau)$ is the importance weight.
This classical policy gradient view~\citep{sutton1999policy} reveals that the importance weight $W$ serves as a \emph{gradient weighting factor}: it determines how much each sample contributes to the parameter update.
More generally, any modification to the importance weight can be understood as defining a reshaping function $\phi(W)$ that reweights the gradient:
\begin{equation}
  \label{eq:reshaped-pg}
  \nabla_\theta \tilde{\mathcal{J}}(\theta) = \mathbb{E}_{\tau \sim \mu}\left[\phi(W(\tau)) \cdot R(\tau) \cdot \nabla_\theta \log \pi_\theta(\tau)\right].
\end{equation}
This gradient-centric view will be central to our analysis: the practical effect of any weight transformation must ultimately be understood through how it reweights the policy gradient.

\textbf{The Variance Challenge of Sequence-Level IS.}
Expanding the importance weight in terms of token-level ratios reveals a fundamental structural tension.
Define the token-level importance ratio as $\rho_t = \frac{\pi_\theta(y_t \mid x, y_{<t})}{\mu(y_t \mid x, y_{<t})}$.
The sequence-level weight is then a product:
\begin{equation}
  \label{eq:seq-weight}
  W(\tau) = \prod_{t=1}^{T} \rho_t,
\end{equation}
while the log-policy gradient is a sum:
\begin{equation}
  \label{eq:log-grad-sum}
  \nabla_\theta \log \pi_\theta(\tau) = \sum_{t=1}^{T} \nabla_\theta \log \pi_\theta(y_t \mid x, y_{<t}).
\end{equation}
This product-sum structure creates a tension: the gradient contribution of each token is weighted by a global factor $W(\tau)$ that compounds across all $T$ positions.
Even small per-token deviations accumulate multiplicatively, causing $W(\tau)$ to exhibit extreme values for long sequences.
The variance of $W$ grows exponentially with $T$, rendering naive importance sampling impractical.

To tame this variance, existing methods define specific reshaping functions $\phi$ that modify the gradient weighting.
GRPO~\citep{shao2024deepseekmath} operates at the token level with a PPO-style clipped surrogate.
From the gradient perspective, the effective weight function depends on the sign of the advantage $A$:
\begin{equation}
  \label{eq:grpo-phi}
  \phi_{\text{GRPO}}(\rho_t; A) =
  \begin{cases}
    \rho_t, & \text{if } A > 0 \text{ and } \rho_t \leq 1{+}\varepsilon, \\
    \rho_t, & \text{if } A < 0 \text{ and } \rho_t \geq 1{-}\varepsilon, \\
    0,      & \text{otherwise (gradient zeroed)}.
  \end{cases}
\end{equation}
This breaks the product structure and treats each token update independently, yielding only a first-order approximation~\citep{Zheng2025StabilizingRL}.

GSPO~\citep{Zheng2025GroupSP} operates at the sequence level, defining the gradient weight as the geometric mean of token-level ratios (i.e., normalizing by sequence length):
\begin{equation}
  \label{eq:gspo-phi}
  \phi_{\text{GSPO}}(W) = \left(\prod_{t=1}^{T} \rho_t\right)^{\!1/T} = \exp\left(\frac{1}{T}\sum_{t=1}^{T}\log\rho_t\right),
\end{equation}
followed by a clipping mechanism similar to GRPO.
This normalization introduces a length-dependent bias: the implicit proposal distribution varies with $T$, and sequences with identical per-token statistics but different lengths receive identical weights despite having different true importance weights (see \cref{app:length-normalization} for a formal analysis).
These approaches all define $\phi$ heuristically; the question of what constitutes a principled choice of $\phi$ motivates the variational framework we develop next.


\section{VESPO: Variational Sequence-level Soft Policy Optimization}
\label{sec:vespo}
\vspace{-.5em}

We develop a principled framework for designing importance weight transformations.
We first show that any reshaping function $\phi(W)$ implicitly defines a proposal distribution, then formulate the design of $\phi$ as a variational problem with variance constraints, and finally derive a closed-form solution.

\subsection{Weight Reshaping as Measure Change}
\label{sec:measure-change}
\vspace{-.5em}

Standard importance sampling performs an unbiased measure change $\mu \to \pi$, while any transformation $\phi(W)$ induces a different measure change $\mu \to Q$ for some implicit proposal $Q$.
We now formalize this perspective.
For any function $G(\tau)$:
\begin{equation}
  \label{eq:measure-change}
  \mathbb{E}_{\tau \sim \mu}\bigl[\phi(W(\tau)) \cdot G(\tau)\bigr] = Z \cdot \mathbb{E}_{\tau \sim Q}\bigl[G(\tau)\bigr],
\end{equation}
where $Z = \mathbb{E}_{\mu}[\phi(W)]$ is a normalization constant, and $Q$ is defined by
\begin{equation}
  \label{eq:proposal-def}
  Q(\tau) = \frac{1}{Z} \mu(\tau) \cdot \phi(W(\tau)).
\end{equation}
The reshaped gradient (\cref{eq:reshaped-pg}) thus equals $Z \cdot \mathbb{E}_{\tau \sim Q}\bigl[R(\tau)\, \nabla_\theta \log \pi_\theta(\tau)\bigr]$: an expectation under the proposal $Q$ retaining $\pi_\theta$'s score function ($Q$'s own score differs unless $\phi \propto W$).

This is the key insight: any sequence-level reshaping function $\phi(W)$ implicitly defines a proposal distribution $Q$.
This perspective provides a unified lens to analyze existing importance weight transformations (see \cref{app:implicit-proposals} for detailed analysis of specific methods).
Rather than handcrafting $\phi$ directly, we can specify desirable properties of the proposal $Q$ and derive the corresponding $\phi$.
A good proposal should remain close to $\mu$ for sampling efficiency, incorporate $\pi$ to guide optimization, and control variance.
The following subsections formalize these desiderata as a variational objective and derive a closed-form $\phi^*$.

\subsection{Variational Objective}
\label{sec:variational-objective}
\vspace{-.5em}

\textbf{KL prior with importance tilt.}
We seek a proposal $Q$ that retains sample efficiency from the available
$\mu$-samples while shifting mass toward trajectories favored by the target
$\pi$. We formalize this as a KL-regularized linear objective:
\begin{equation}
  \label{eq:vespo-objective}
  \mathcal{J}(Q) \;=\; D_{\mathrm{KL}}(Q \| \mu) \;-\; \alpha\, \mathbb{E}_Q[\log W],
  \qquad \alpha \geq 0.
\end{equation}
The KL term keeps $Q$ close to the sampling distribution, ensuring that
$\mu$-samples remain informative for estimating expectations under $Q$.
The log-importance term rewards $Q$ for placing mass where $\pi$ exceeds
$\mu$ (i.e., $W > 1$), tilting $Q$ toward the target policy and reducing
bias in the gradient estimate. The coefficient $\alpha$ acts as an inverse
temperature controlling the strength of this tilt: $\alpha = 0$ recovers
$Q = \mu$, $\alpha = 1$ yields $Q = \pi$, and $\alpha > 1$ concentrates
$Q$ on increasingly high-importance trajectories. This is the
KL-regularized linear form standard in maximum-entropy RL and
control-as-inference~\citep{levine2018reinforcement}, here applied to the
proposal distribution rather than the policy itself.

\textbf{Variance Constraint.}
Proximity alone is insufficient: finite sample sizes demand variance control.
In importance sampling, the variance of the estimator scales with the second moment $\mathbb{E}_\mu[W^2]$, a classical result that also underlies the effective sample size (ESS) diagnostic.

Under the measure-change view, this second moment can be related to an expectation under $Q$.
By \cref{eq:proposal-def}, we have $Q(\tau) \propto \mu(\tau) \phi(W(\tau))$, so
\begin{equation}
  \label{eq:variance-link}
  \mathbb{E}_{\tau \sim Q}[W(\tau)] \propto \mathbb{E}_{\tau \sim \mu}[\phi(W) \cdot W].
\end{equation}
When $\phi(W) \approx W$ (approaching unbiased IS), this recovers $\mathbb{E}_\mu[W^2]$; for general $\phi$ with $K := \sup_w \phi(w)/w$, the chain $\mathbb{E}_\mu[\phi(W)^2] = Z \cdot \mathbb{E}_Q[\phi(W)] \leq Z K \cdot \mathbb{E}_Q[W]$ (formalized in \cref{prop:variance-bound}) shows that bounding $\mathbb{E}_Q[W]$ controls the second moment for any reshaping kernel. We therefore impose:
\begin{equation}
  \label{eq:moment-constraint}
  \mathbb{E}_{\tau \sim Q}[W(\tau)] \leq C.
\end{equation}

\textbf{Constrained Optimization.}
Combining the KL-regularized tilt objective with the variance constraint:
\begin{align}
  \label{eq:constrained-opt}
  \min_{Q} \;    & D_{\mathrm{KL}}(Q \| \mu) - \alpha\, \mathbb{E}_Q[\log W] \notag    \\
  \text{s.t.} \; & \mathbb{E}_{Q}[W] \leq C, \quad \textstyle\int Q(\tau)\, d\tau = 1.
\end{align}
Introducing Lagrange multipliers $\lambda \geq 0$ for the moment constraint and $\gamma$ for normalization, we obtain the Lagrangian:
\begin{align}
  \label{eq:lagrangian}
  \mathcal{L}(Q, \lambda, \gamma) = \; & D_{\mathrm{KL}}(Q \| \mu) - \alpha\, \mathbb{E}_Q[\log W] \notag                                 \\
                                       & + \lambda \bigl(\mathbb{E}_Q[W] {-} C\bigr) + \gamma \bigl(\textstyle\int Q\, d\tau {-} 1\bigr).
\end{align}

\subsection{Closed-Form Solution}
\label{sec:closed-form}
\vspace{-.5em}

Taking the functional derivative $\frac{\delta \mathcal{L}}{\delta Q} = 0$ yields (see \cref{app:variational-derivation}):
\begin{equation}
  \label{eq:Q-solution}
  Q^*(\tau) \;\propto\; \mu(\tau) \cdot W(\tau)^{\alpha} \cdot \exp(-\lambda W(\tau)).
\end{equation}
Comparing with \cref{eq:proposal-def}, we identify the reshaping function:
\begin{equation}
  \label{eq:gamma-kernel}
  \phi(W) = W^{\alpha} \cdot \exp(-\lambda W).
\end{equation}

This kernel has two components: the power term $W^\alpha$ and the exponential term $\exp(-\lambda W)$ for soft suppression.
It is smooth and differentiable everywhere, avoiding the discontinuities of hard clipping.

\textbf{Surrogate Objective.}
The reshaped gradient implicitly maximizes a smooth surrogate $\mathcal{J}_{\text{VESPO}}(\theta) = \mathbb{E}_{\mu}[f(W(\theta)) A(\tau)]$ that saturates as $W \to \infty$ (\cref{fig:phi-comparison}); $f$ is the lower incomplete gamma function (full derivation in \cref{app:variational-derivation}).

\begin{figure}[t]
  \centering
  \includegraphics[width=1.\linewidth]{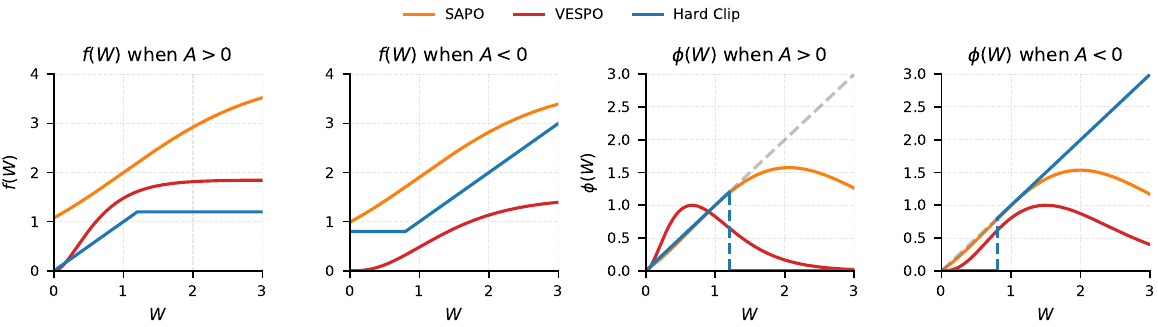}
  \caption{Surrogate objectives $f(W)$ and gradient scaling factors $\phi(W) = W \cdot f'(W)$ for positive and negative advantages. Hard clipping zeros $\phi$ abruptly at the boundary; VESPO peaks near $W{=}1$ and decays smoothly.}
  \label{fig:phi-comparison}
  \vspace{-.5em}
\end{figure}

\textbf{Shifted form for practice.}
In what follows, we use the shifted form $\phi(W) = W^{c_1}\exp(c_2(1{-}W))$ with $c_1 = \alpha$, $c_2 = \lambda$ to ensure $\phi(1) = 1$, so that on-policy samples receive unit weight. This shift differs from \cref{eq:gamma-kernel} only by the multiplicative constant $e^{c_2}$, equivalent to rescaling the learning rate, so the variational derivation carries over unchanged.

\subsection{Variance Bound and Staleness Robustness}
\label{sec:variance-guarantee}
\vspace{-.5em}

The variational derivation in \cref{sec:closed-form} identifies the kernel form, but the constraint $\mathbb{E}_Q[W] \leq C$ controls only the first moment under $Q$. We now establish a formal variance bound that translates this constraint into an explicit guarantee for the second moment under $\mu$, providing a theoretical foundation for the deployed kernel.

\begin{proposition}[Variance Bound]
  \label{prop:variance-bound}
  Let $\phi(W) = W^{c_1}\exp(c_2(1{-}W))$ with $c_1, c_2 > 0$, and define $K = \sup_{w>0} \phi(w)/w$. Under the moment constraint $\mathbb{E}_Q[W] \leq C$,
  \begin{equation}
    \label{eq:variance-bound}
    \mathbb{E}_\mu\bigl[\phi(W)^2\bigr] \;\leq\; Z \cdot K \cdot C, \quad Z = \mathbb{E}_\mu[\phi(W)].
  \end{equation}
  Moreover, $K < \infty$ if and only if $c_1 \geq 1$. When finite, $K$ admits the closed form
  \begin{equation}
    \label{eq:K-formula}
    K = \left(\frac{c_1-1}{c_2}\right)^{c_1-1} \exp(c_2 - c_1 + 1),
  \end{equation}
  attained at $w^* = (c_1{-}1)/c_2$.
\end{proposition}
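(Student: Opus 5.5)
The proof has two independent parts: the second-moment inequality, then the analysis of $K$.

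\textbf{The bound.} Take $Q$ to be the implicit proposal of \cref{eq:proposal-def}, i.e.\ $Q = \mu\,\phi(W)/Z$ with $Z=\mathbb{E}_\mu[\phi(W)]$, assumed positive and finite. Applying the measure-change identity \cref{eq:measure-change} with $G=\phi(W)$ gives
\[
\mathbb{E}_\mu[\phi(W)^2] = Z\,\mathbb{E}_Q[\phi(W)].
\]
By definition of $K$, we have $\phi(w)\le K w$ pointwise for every $w>0$. Since $Q$ is a probability measure, $\mathbb{E}_Q[\phi(W)]\le K\,\mathbb{E}_Q[W]\le KC$ by the moment constraint \cref{eq:moment-constraint}. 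Multiplying by $Z\ge 0$ yields \cref{eq:variance-bound}. If $K=\infty$ the bound is vacuous, so no separate case is needed. The only technical point is that the expectations must be well defined. All quantities are nonnegative, so Tonelli's theorem covers the identity even when some terms are infinite.

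\textbf{Finiteness of $K$.} Set $g(w)=\phi(w)/w = w^{c_1-1}e^{c_2(1-w)}$ on $(0,\infty)$. As $w\to\infty$, the exponential factor forces $g\to 0$ for any $c_1$. The behaviour at $w\to 0^+$ is therefore what decides finiteness:
\begin{itemize}
\item If $c_1<1$, then $w^{c_1-1}\to\infty$ while $e^{c_2(1-w)}\to e^{c_2}$, so $K=\infty$.
\item If $c_1\ge 1$, then $g$ extends continuously to $[0,\infty)$ and vanishes at infinity, so it is bounded and $K<\infty$.
\end{itemize}
This proves the equivalence.

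\textbf{Closed form.} To compute $K$, differentiate $\log g = (c_1-1)\log w + c_2(1-w)$, whose derivative is $(c_1-1)/w - c_2$. This vanishes at $w^*=(c_1-1)/c_2$. The function $\log g$ is concave for $c_1>1$, so $w^*$ is the global maximizer. Substituting gives
\[
g(w^*) = \Bigl(\tfrac{c_1-1}{c_2}\Bigr)^{c_1-1}\exp\bigl(c_2-(c_1-1)\bigr),
\]
which is \cref{eq:K-formula}.

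The main obstacle is the boundary case $c_1=1$. There $w^*=0$ lies outside the open domain, and $g(w)=e^{c_2(1-w)}$ is strictly decreasing. The supremum $e^{c_2}$ is therefore approached as $w\to 0^+$ but not attained. The formula still gives the right value under the convention $0^0=1$. I would state explicitly that "attained" is meant as a supremum at the boundary in this case, while for $c_1>1$ it is a genuine maximum.
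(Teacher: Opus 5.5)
Your proposal is correct and follows the paper's proof essentially step for step. It uses the measure-change identity with $g=\phi$, then the pointwise bound $\phi(w)\le Kw$ together with the moment constraint, then the $w\to 0^+$ and $w\to\infty$ limits for finiteness, and finally the log-derivative computation giving $w^*=(c_1-1)/c_2$. Your extra care at the $c_1=1$ boundary (supremum $e^{c_2}$ approached but not attained, consistent with the formula under $0^0=1$) matches the paper, which likewise notes that for $c_1=1$ the supremum is only approached as $w\to 0^+$.
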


The proof, given in \cref{app:variance-bound}, follows by setting $g = \phi$ in the measure-change identity (\cref{eq:measure-change}) and applying $\phi(w) \leq K w$. The bound is non-trivial precisely when $c_1 \geq 1$: at our practical settings, $(c_1,c_2)=(2,3)$ gives $K \approx 2.46$ and $(3,2)$ gives $K = 1$, while for $c_1 < 1$ the bound becomes vacuous ($K = \infty$). In the unbiased-IS limit ($\phi \approx W$), the bound reduces to $\mathbb{E}_\mu[W^2] \lesssim Z \cdot K \cdot C$, i.e., the classical second-moment quantity controlling effective sample size (ESS) and IS variance, so \cref{prop:variance-bound} can be viewed as extending this classical control to general $\phi$. The criterion $c_1 \geq 1$ identified by the bound coincides with the inverse-temperature regime of the variational objective (\cref{sec:variational-objective}, $\alpha = c_1 \geq 1$) and is consistent with our practical choices selected purely from empirical performance.

\textbf{Uniform Boundedness.}
The smooth kernel is also uniformly bounded for any $c_1, c_2 > 0$ (proof and explicit $\phi_{\max}$ in \cref{app:variance-bound}). Consequently, no single off-policy sample can produce an unbounded gradient contribution, no matter how stale (i.e., no matter how large its raw weight $W$). Hard-clipping methods achieve a similar property only through discontinuous truncation, while VESPO's smooth attenuation preserves differentiability throughout.

\subsection{The VESPO Algorithm}
\label{sec:vespo-algorithm}
\vspace{-.5em}

We instantiate the shifted kernel from \cref{sec:closed-form} with asymmetric hyperparameters $(c_1, c_2)$ for positive and negative advantages, mirroring the asymmetric clipping in PPO. Recent work~\citep{tang2025rethinkingsamplepolarityreinforcement} has shown that positive and negative samples exhibit different gradient dynamics during training; accordingly, we apply stronger suppression for $A < 0$ with $W < 1$ to prevent excessive penalization of samples the policy already dislikes, as shown in \cref{fig:phi-comparison}.
We treat $(c_1, c_2)$ as tunable hyperparameters, allowing flexibility beyond the specific values implied by the variational derivation. The asymmetric assignment is an instance of the same kernel functional family (rather than a new family), so \cref{prop:variance-bound} applies to each branch since both $(c_1^+, c_2^+)$ and $(c_1^-, c_2^-)$ satisfy $c_1, c_2 > 0$.
Substituting the shifted kernel into \cref{eq:reshaped-pg}, the VESPO gradient estimator becomes:
\begin{equation}
  \label{eq:vespo-gradient}
  \nabla \mathcal{J}_{\text{VESPO}} = \mathbb{E}_{\tau \sim \mu}\bigl[ W^{c_1} \exp(c_2(1 {-} W)) \cdot A(\tau) \cdot \nabla \log \pi_\theta(\tau) \bigr],
\end{equation}
where $A(\tau) = R(\tau) - b$ is the advantage with baseline $b$ (mean reward within each prompt group, following GRPO~\citep{shao2024deepseekmath}). For general $\phi \neq W$, the baseline acts as a control variate for the deployed surrogate $\mathbb{E}_\mu[\phi(W) A(\tau)]$ rather than preserving the unbiasedness identity that holds only for $\phi = W$.

The kernel adapts smoothly to the importance weight: $\phi(W) \approx 1$ near on-policy, the exponential term decays for $W \gg 1$, and the power term down-weights $W \ll 1$. The factorized form gives flexible control: $c_1$ governs $W < 1$ behavior and $c_2$ controls decay for $W > 1$.

\textbf{Computational cost.}
VESPO incurs no additional forward passes or memory beyond GRPO/GSPO: it reuses the same per-token log-probabilities under $\pi_\theta$ and $\mu$, computes $\log W$ as their masked sum, and applies $\phi$ as an elementwise reshaping detached from the computation graph. We evaluate $c_1\log W + c_2(1-W)$ in log-space, exponentiating only at the final step to avoid overflow under extreme staleness. This makes VESPO a drop-in replacement in standard RL pipelines (full pseudocode in \cref{app:algorithm}).

\vspace{-.5em}
\section{Experiments}
\label{sec:experiments}
\vspace{-.5em}

We evaluate VESPO on \textbf{mathematical reasoning} (the primary setting) and \textbf{code generation} (cross-domain transfer; \cref{sec:code-transfer}). For math, we examine two practical sources of off-policy distribution shift: (1) \textbf{policy staleness} from batched rollouts, where later mini-batches are updated using samples from an outdated policy; and (2) \textbf{train-inference mismatch}, where different implementations between training and inference engines produce different outputs for the same input, an effect exacerbated in MoE models due to routing inconsistencies.

\vspace{-.5em}
\subsection{Experimental Setup}
\label{sec:exp-setup}
\vspace{-.5em}

We train on DAPO-Math~\citep{yu2025dapo} (verifier reward~\citep{mathverify2025}) for math reasoning across three model scales: Llama-3.2-3B-Instruct~\citep{grattafiori2024llama3herdmodels}, Qwen3-8B-Base, and Qwen3-30B-A3B-Base~\citep{Yang2025Qwen3TR}, on 32 H20 GPUs with veRL~\citep{sheng2024hybridflow}. Math evaluation uses AIME 2024/2025, AMC 2023, and MATH-500~\citep{hendrycks2021measuring} (avg@$k$). For cross-domain transfer (\cref{sec:code-transfer}), we additionally train Qwen3-30B-A3B-Base on PRIME-RL/Eurus-2-RL-Data~\citep{cui2025process} with execution reward and evaluate on HumanEval+, MBPP+~\citep{liu2023humanevalplus}, and LiveCodeBench v6~\citep{jain2024livecodebench} (pass@10). Best checkpoint per method is selected by average accuracy. Baselines GRPO~\citep{shao2024deepseekmath}, GSPO~\citep{Zheng2025GroupSP}, SAPO~\citep{gao2025softadaptivepolicyoptimization}, TOPR~\citep{roux2025taperedoffpolicyreinforcestable}, CISPO~\citep{chen2025minimax}, BAPO~\citep{xi2025bapostabilizingoffpolicyreinforcement}, etc., use their respective official hyperparameters; VESPO uses $(c_1, c_2) = (2.0, 3.0)$ for $A{>}0$ and $(3.0, 2.0)$ for $A{<}0$, applied identically across all settings (no retuning between math and code). Staleness is simulated by fixing mini-batch size $256$ and varying global batch size; primary experiments use staleness ratio $N{=}\text{gbs}/\text{mbs}{=}8$, ablations span $N \in \{4, 8, 16, 32, 64\}$. Full per-method hyperparameters and infrastructure details are in \cref{app:exp-details}.

\vspace{-.5em}
\subsection{Main Results}
\label{sec:main-results}
\vspace{-.5em}

\begin{wraptable}{r}{0.6\linewidth}
  \vspace{-1.2em}
  \caption{Mathematical reasoning accuracy (\%) with gbs/mbs $= 8$ across three model scales. Best results in \textbf{bold}.}
  \label{tab:main-results}
  \centering
  \footnotesize
  \setlength{\tabcolsep}{2.5pt}
  \begin{tabular}{l|l|cccc|c}
    \toprule
    Model & Method            & AIME25           & AIME24           & AMC23            & MATH500          & Avg              \\
    \midrule
    \multirow{4}{*}{\makecell[l]{Llama-                                                                                      \\3.2-3B-\\Instruct}}
          & GRPO              & 0.5              & 14.5             & 40.6             & \textbf{51.6}    & 26.8             \\
          & GSPO              & 0.2              & \textbf{14.7}    & 43.9             & 47.8             & 26.7             \\
          & SAPO              & \textbf{0.7}     & 12.2             & 34.1             & 51.0             & 24.5             \\
          & \hl\textbf{VESPO} & \hl 0.6          & \hl 13.9         & \hl\textbf{47.3} & \hl 51.3         & \hl\textbf{28.3} \\
    \midrule
    \multirow{4}{*}{\makecell[l]{Qwen3-                                                                                      \\8B-Base}}
          & GRPO              & 27.4             & 40.0             & 74.5             & 76.4             & 54.6             \\
          & GSPO              & 28.8             & 37.7             & 80.8             & 78.4             & 56.4             \\
          & SAPO              & \textbf{36.7}    & 49.0             & 80.0             & 78.0             & \textbf{60.9}    \\
          & \hl\textbf{VESPO} & \hl 33.5         & \hl\textbf{49.4} & \hl\textbf{82.2} & \hl\textbf{78.6} & \hl\textbf{60.9} \\
    \midrule
    \multirow{4}{*}{\makecell[l]{Qwen3-                                                                                      \\30B-A3B-\\Base}}
          & GRPO              & 28.2             & 40.0             & 81.4             & 78.3             & 57.0             \\
          & GSPO              & 25.1             & 43.3             & 83.0             & \textbf{84.8}    & 59.1             \\
          & SAPO              & 21.4             & 27.9             & 73.0             & 84.6             & 51.7             \\
          & \hl\textbf{VESPO} & \hl\textbf{44.3} & \hl\textbf{59.6} & \hl\textbf{91.4} & \hl\textbf{84.8} & \hl\textbf{70.0} \\
    \bottomrule
  \end{tabular}
  \vspace{-1.2em}
\end{wraptable}

\cref{tab:main-results} presents the main results with gbs/mbs = 8 across three model scales.
VESPO achieves the best (or tied-best) average accuracy on all three models, demonstrating the generality of our approach.
Notably, the improvements are most pronounced on Qwen3-30B-A3B-Base, where VESPO outperforms the best baseline by $10.9$ percentage points in average accuracy ($70.0$ vs $59.1$).
This suggests that VESPO's soft suppression of extreme importance weights is particularly beneficial for MoE architectures, where routing inconsistencies amplify distribution shift and make training stability more challenging.
Given these observations, we focus our detailed analysis on Qwen3-30B-A3B-Base in the following sections.

\subsection{Robustness to Policy Staleness}
\label{sec:staleness}
\vspace{-.5em}

We examine robustness to policy staleness by varying the staleness ratio $N = \text{gbs}/\text{mbs}$ from 4 to 64. The training-reward curves in \cref{fig:training-reward-comparison} show VESPO's remarkable consistency: all five curves ($N{=}4$ to $N{=}64$) follow nearly identical trajectories, converging to similar final rewards around $0.7$. In contrast, GRPO saturates early; GSPO degrades as $N$ grows, and additionally suffers a catastrophic collapse around step $1{,}200$ at $N{=}4$; SAPO is competitive at $N{=}4$ but its training reward becomes unstable from $N{=}8$ onward, and downstream accuracy collapses at $N{\geq}32$. These differences translate to downstream performance (\cref{tab:staleness}): VESPO achieves the best average accuracy at every $N$, maintaining $61.8\%$ at $N{=}64$, while GRPO and GSPO degrade to $48.6\%$ and $47.6\%$ respectively, and SAPO collapses entirely.

\begin{table}[t]
  \vspace{-.5em}
  \caption{Effect of staleness ratio $N=\text{gbs/mbs}$ on Qwen3-30B-A3B-Base (avg@$k$, \%). $N{=}8$ is in \cref{tab:main-results}. SAPO's downstream accuracy collapses at $N{\geq}32$.}
  \label{tab:staleness}
  \centering
  \footnotesize
  \setlength{\tabcolsep}{3.2pt}
  \begin{tabular}{l|ccccc|l|ccccc}
    \toprule
    \multicolumn{6}{c|}{$N=4$}                       & \multicolumn{6}{c}{$N=16$}                                                                                                                                                                  \\
    Method                                           & AIME25                     & AIME24        & AMC23         & MATH500       & Avg           & Method         & AIME25        & AIME24        & AMC23         & MATH500       & Avg           \\
    \midrule
    GRPO                                             & 22.1                       & 33.1          & 76.4          & 84.5          & 54.0          & GRPO           & 20.3          & 31.4          & 71.4          & 81.3          & 51.1          \\
    GSPO                                             & 27.6                       & 43.3          & 83.1          & 85.7          & 59.9          & GSPO           & 24.3          & 41.6          & 83.1          & 84.2          & 58.3          \\
    SAPO                                             & 38.4                       & 51.4          & 85.2          & \textbf{85.9} & 65.2          & SAPO           & 19.6          & 26.0          & 72.2          & 84.3          & 50.5          \\
    \rowcolor[rgb]{0.92,0.96,0.92}    \textbf{VESPO} & \textbf{43.1}              & \textbf{60.3} & \textbf{91.2} & 85.4          & \textbf{70.0} & \textbf{VESPO} & \textbf{40.2} & \textbf{53.2} & \textbf{90.8} & \textbf{84.8} & \textbf{67.3} \\
    \midrule
    \multicolumn{6}{c|}{$N=32$}                      & \multicolumn{6}{c}{$N=64$}                                                                                                                                                                  \\
    Method                                           & AIME25                     & AIME24        & AMC23         & MATH500       & Avg           & Method         & AIME25        & AIME24        & AMC23         & MATH500       & Avg           \\
    \midrule
    GRPO                                             & 21.8                       & 33.4          & 73.4          & 80.9          & 52.4          & GRPO           & 14.6          & 28.0          & 69.4          & 82.4          & 48.6          \\
    GSPO                                             & 18.8                       & 27.3          & 73.4          & 80.6          & 50.0          & GSPO           & 15.4          & 24.1          & 73.9          & 76.8          & 47.6          \\
    SAPO                                             & 12.4                       & 7.2           & 29.5          & 34.5          & 20.9          & SAPO           & 3.3           & 7.3           & 23.8          & 31.4          & 16.5          \\
    \rowcolor[rgb]{0.92,0.96,0.92}    \textbf{VESPO} & \textbf{37.7}              & \textbf{51.4} & \textbf{85.2} & \textbf{83.7} & \textbf{64.5} & \textbf{VESPO} & \textbf{34.2} & \textbf{46.2} & \textbf{83.6} & \textbf{83.3} & \textbf{61.8} \\
    \bottomrule
  \end{tabular}
  \vspace{-1em}
\end{table}

\begin{figure}[ht]
  \centering
  \includegraphics[width=\linewidth]{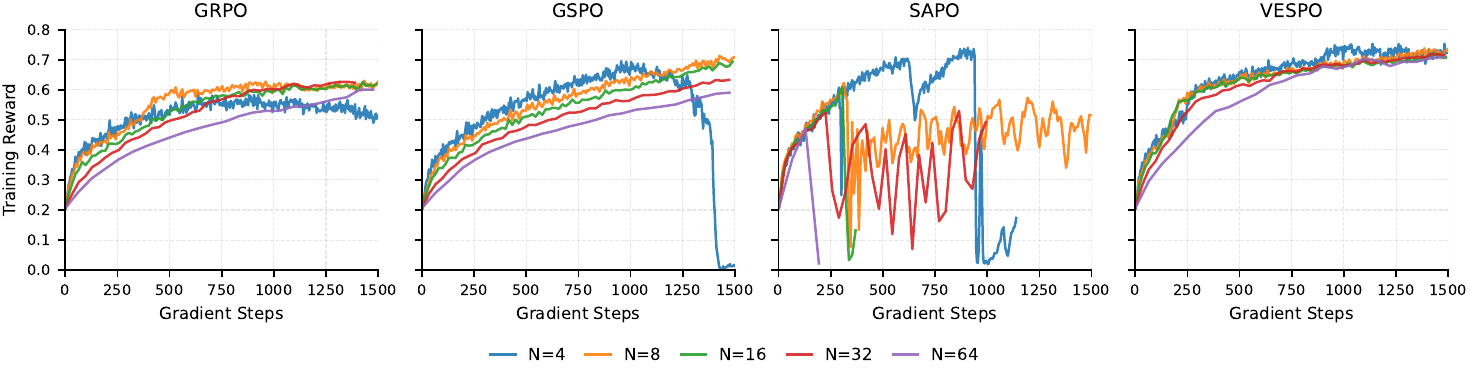}
  \caption{Training reward across staleness levels ($N \in \{4, 8, 16, 32, 64\}$) on Qwen3-30B-A3B-Base. Each panel shows one method; VESPO is the only one with stable, consistent curves across all $N$.}
  \label{fig:training-reward-comparison}
  \vspace{-.5em}
\end{figure}

\begin{figure}[ht]
  \centering
  \includegraphics[width=\linewidth]{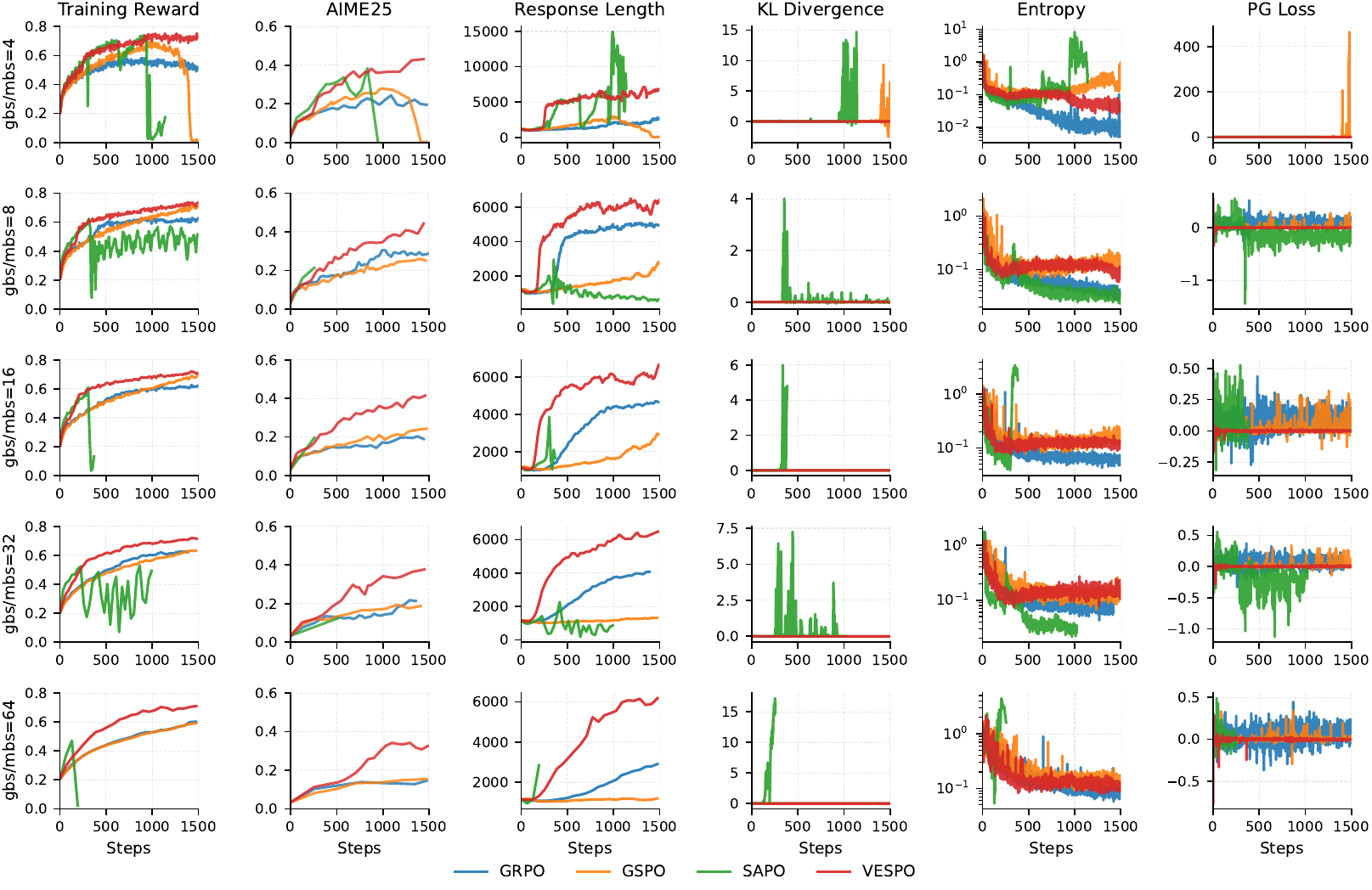}
  \caption{Training dynamics on Qwen3-30B-A3B-Base across staleness $N \in \{4, 8, 16, 32, 64\}$ (rows: $N$; columns: training reward, AIME25, response length, KL, entropy, PG loss). VESPO (red) is the only method with stable training across all $N$; baseline failure modes (GRPO entropy collapse, GSPO length-bias amplification, SAPO under-suppressed $A{<}0$) are detailed in \cref{app:failure-modes}.}
  \label{fig:training-dynamics}
  \vspace{-1em}
\end{figure}

\vspace{-.5em}
\subsection{Comparison with Recent Importance Weight Methods}
\label{sec:recent-baselines}
\vspace{-.5em}

Recent works propose alternative importance weight transformations targeting specific instability sources. We compare VESPO directly against three representative methods on Qwen3-30B-A3B-Base under identical training setup ($N{=}8$):
{TOPR}~\citep{roux2025taperedoffpolicyreinforcestable} hard-clamps sequence-level ratios to $[0,1]$ for negative rewards (and uses weight 1 for positive rewards) with length normalization;
{CISPO}~\citep{chen2025minimax} clips token-level ratios as stop-gradient REINFORCE weights;
{BAPO}~\citep{xi2025bapostabilizingoffpolicyreinforcement} adjusts PPO clipping bounds dynamically with a hard cap for negative advantages.

\begin{table}[ht]
  \vspace{-1.em}
  \begin{minipage}[t]{0.55\linewidth}
    \caption{Comparison with recent importance weight reshaping methods on Qwen3-30B-A3B-Base ($N{=}8$).}
    \label{tab:recent-baselines}
    \centering
    \footnotesize
    \setlength{\tabcolsep}{4pt}
    \begin{tabular}{lccccc}
      \toprule
      Method                                             & AIME25        & AIME24        & AMC23         & MATH500       & Avg           \\
      \midrule
      TOPR                                               & 16.5          & 29.9          & 66.6          & 81.8          & 48.7          \\
      CISPO                                              & 16.0          & 25.5          & 70.2          & 83.0          & 48.7          \\
      BAPO                                               & 34.2          & 47.9          & 88.9          & 83.2          & 63.6          \\
      \midrule
      \rowcolor[rgb]{0.92,0.96,0.92}      \textbf{VESPO} & \textbf{44.3} & \textbf{59.6} & \textbf{91.4} & \textbf{84.8} & \textbf{70.0} \\
      \bottomrule
    \end{tabular}
  \end{minipage}\hfill
  \begin{minipage}[t]{0.43\linewidth}
    \caption{Cross-domain transfer to code generation (pass@10) using the same $(c_1,c_2)$.}
    \label{tab:code-transfer}
    \centering
    \footnotesize
    \setlength{\tabcolsep}{4pt}
    \begin{tabular}{lcccc}
      \toprule
      Method                                             & HE+           & MBPP+         & LCB v6        & Avg           \\
      \midrule
      GRPO                                               & 84.8          & 74.9          & 23.1          & 60.9          \\
      GSPO                                               & 82.9          & 73.0          & 23.1          & 59.7          \\
      SAPO                                               & 86.6          & 74.6          & 24.2          & 61.8          \\
      \midrule
      \rowcolor[rgb]{0.92,0.96,0.92}      \textbf{VESPO} & \textbf{88.4} & \textbf{75.4} & \textbf{25.3} & \textbf{63.0} \\
      \bottomrule
    \end{tabular}
  \end{minipage}
  \vspace{-.5em}
\end{table}

\cref{tab:recent-baselines} shows VESPO substantially outperforms all three baselines: $+10$ on AIME25 over BAPO and $+28$ over TOPR/CISPO. The training dynamics (\cref{fig:recent-baselines} in \cref{app:recent-baselines-dynamics}) reveal distinct failure modes: \emph{CISPO} collapses around step 280 from insufficient sequence-level variance control; \emph{BAPO} exhibits entropy explosion after step 1{,}100 as adaptive clip bounds over-relax; \emph{TOPR} converges slowly with suppressed response length. The advantages of VESPO stem not from any single component, but from the joint effect of sequence-level operation, smooth (non-clipping) kernel, and variational derivation.

\vspace{-.5em}
\subsection{Code Generation}
\label{sec:code-transfer}
\vspace{-.5em}
\cref{tab:code-transfer} reports pass@10 on HumanEval+, MBPP+, and LiveCodeBench v6 after training on code (setup in \cref{sec:exp-setup}; same $(c_1,c_2)$ as in math). VESPO is the best on all three benchmarks, demonstrating that the variational framework yields hyperparameters that generalize across reward modalities (verifier-based math vs.\ execution-based code). Training dynamics and detailed analysis are in \cref{app:code-dynamics}.

\vspace{-.5em}
\subsection{Robustness to Train-Inference Mismatch}
\label{sec:train-infer-mismatch}
\vspace{-.5em}
\begin{wrapfigure}{r}{0.47\linewidth}
  \centering
  \vspace{-1.2em}
  \includegraphics[width=\linewidth]{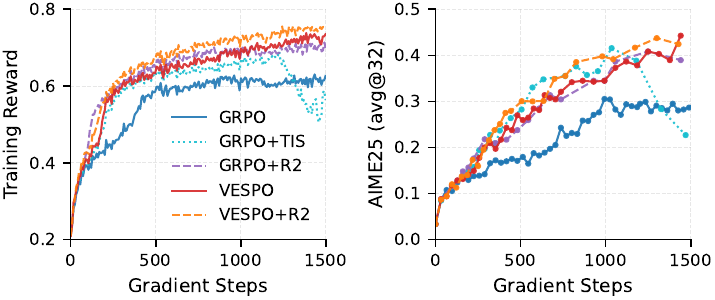}
  \caption{Train-inference mismatch on Qwen3-30B-A3B-Base.}
  \label{fig:train-infer-mismatch}
  \vspace{-1em}
\end{wrapfigure}

A second source of off-policy shift is \emph{train-inference mismatch}: different numerical implementations between training (FSDP/Megatron) and inference (vLLM/SGLang~\citep{zheng2024sglang}) engines produce different outputs, amplified in MoE models. Engineering fixes include Truncated IS (TIS)~\citep{liu-li-2025-rl-collapse} and Routing Replay (R2)~\citep{ma2025stabilizingmoereinforcementlearning}. \cref{fig:train-infer-mismatch} shows vanilla GRPO plateaus at $\sim$$0.60$; adding TIS or R2 improves it. VESPO \emph{without any fixes} matches GRPO+R2, and VESPO+R2 attains the highest reward and best AIME25, showing complementarity with engineering fixes.

  \subsection{Ablations}
  \vspace{-.5em}

  \begin{figure}[ht]
    \centering
    \includegraphics[width=\linewidth]{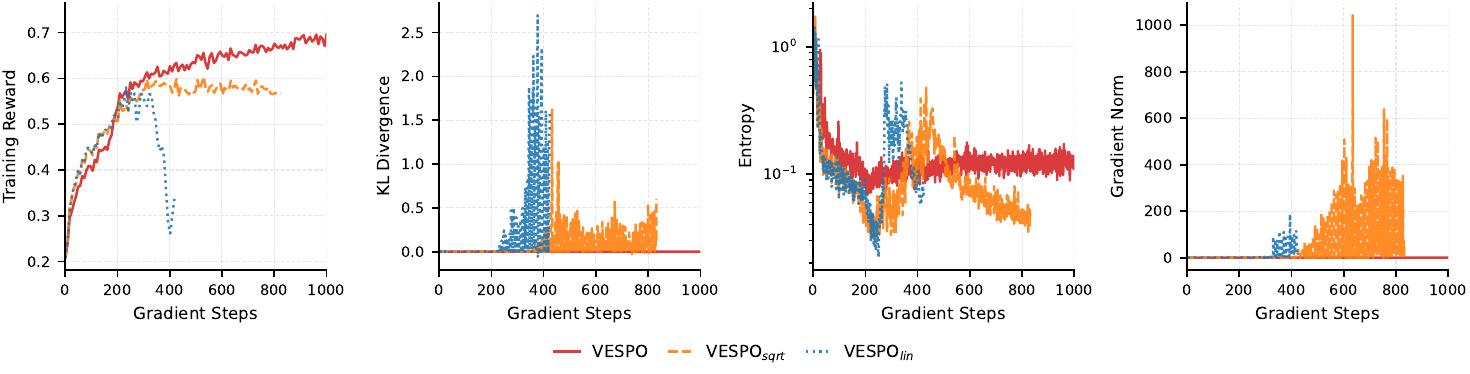}
    \vspace{-1.5em}
    \caption{Ablation on length normalization. VESPO without normalization achieves stable training; adding $\sqrt{T}$ or $T$ normalization causes instability and collapse.}
    \label{fig:ablation-length-norm}
    \vspace{-1.5em}
  \end{figure}

  \textbf{Length normalization.}
  \label{sec:ablation-length-norm}
  A key design choice in VESPO is operating at the sequence level \emph{without} length normalization. Methods like GSPO normalize by $1/T$ to reduce variance, but we hypothesize this creates length-dependent bias. We compare VESPO with two normalized variants: VESPO$_{\text{sqrt}}$ (normalize by $\sqrt{T}$) and VESPO$_{\text{lin}}$ (normalize by $T$). \cref{fig:ablation-length-norm} reveals striking differences: VESPO$_{\text{lin}}$ collapses around step 350 (KL spike + gradient explosion); VESPO$_{\text{sqrt}}$ shows periodic gradient spikes; VESPO without normalization is stable. Length normalization causes longer sequences to dominate batch gradients, biasing toward even longer outputs until collapse, a failure mode VESPO avoids.

  \label{sec:ablation-asymmetric}
  \begin{wrapfigure}{r}{0.3\linewidth}
    \centering
    \vspace{-1.2em}
    \includegraphics[width=\linewidth]{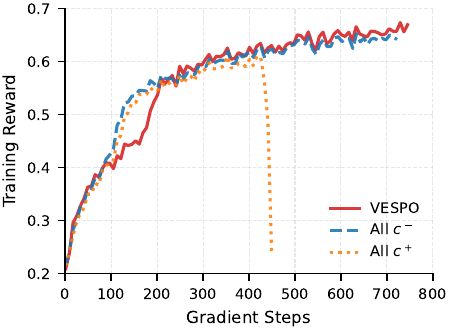}
    \caption{Asymmetric hyperparameter ablation.}
    \label{fig:ablation-asymmetric}
    \vspace{-2em}
  \end{wrapfigure}
  \textbf{Asymmetric hyperparameters.}
  VESPO uses asymmetric hyperparameters $c^+ = (2, 3)$ and $c^- = (3, 2)$ for positive and negative advantages. We compare against symmetric variants (\cref{fig:ablation-asymmetric}). Using $(2, 3)$ for both gives insufficient suppression for $A < 0$, leading to instability; using $(3, 2)$ for both over-suppresses positive samples, slowing learning. The asymmetric design balances these trade-offs. More broadly, VESPO is robust to moderate hyperparameter variations as long as sufficient down-weighting is applied to negative-advantage samples with $W < 1$.


  \vspace{-.5em}
  \section{Related Work}
  \label{sec:related}
  \vspace{-.5em}

  \textbf{Policy Gradient Methods for LLMs.}
  PPO~\citep{Schulman2017ProximalPO} stabilizes updates via a clipped surrogate objective, while value-free alternatives have gained traction for LLM fine-tuning: GRPO~\citep{shao2024deepseekmath} normalizes rewards within sample groups and clips token-level ratios; GSPO~\citep{Zheng2025GroupSP} operates at the sequence level with geometric-mean (length) normalization; DAPO~\citep{yu2025dapo} introduces decoupled clipping and dynamic sampling. These methods control variance via heuristic clipping or normalization. Our measure-change view (\cref{sec:measure-change}) reveals each as a specific choice of implicit proposal distribution, and VESPO derives the weight function from a variational principle rather than manual design.

  \textbf{Importance Weight Reshaping.}
  Recent alternatives to hard clipping can be categorized along three axes: granularity, boundary, and theoretical origin. Token-level hard-boundary methods include CISPO~\citep{chen2025minimax}, BAPO~\citep{xi2025bapostabilizingoffpolicyreinforcement}, GPPO/KLEAR~\citep{su2025klearreasoner}, and M2PO~\citep{zheng2025prosperity}; however, per-token reshaping does not compose into IS under any single sequence-level proposal (\cref{app:implicit-proposals}). SAPO~\citep{gao2025softadaptivepolicyoptimization} uses token-level smooth gating, sharing VESPO's philosophy but inheriting the token-level limitation. TOPR~\citep{roux2025taperedoffpolicyreinforcestable} is sequence-level but with hard clamping to $[0,1]$ for negative rewards and length normalization, suffering from boundary discontinuity and length bias (\cref{app:length-normalization}). {VESPO uniquely combines sequence-level operation, smooth kernel, and variational derivation}, outperforming TOPR/CISPO/BAPO under matched setup (\cref{tab:recent-baselines}). Engineering fixes such as Routing Replay~\citep{ma2025stabilizingmoereinforcementlearning} and truncated IS~\citep{liu-li-2025-rl-collapse} target specific instability sources and are complementary to VESPO (\cref{sec:train-infer-mismatch}).


  \vspace{-.5em}
  \section{Conclusion}
  \label{sec:conclusion}
  \vspace{-.5em}

  We presented a measure-change view of importance weight reshaping and derived VESPO, a sequence-level smooth kernel with explicit variance guarantees. VESPO is stable under staleness up to $64\times$, improves consistently across dense and MoE models, and transfers to code generation without retuning. While the kernel form is principled, specific $(c_1, c_2)$ values still require user choice; our experiments cover verifiable-reward settings (math and code) and models up to 30B-scale, with broader reward modalities and frontier-scale verification left as future directions. Beyond LLM RL, the measure-change formulation and variance-constrained variational design generalize naturally to other variance-bias trade-offs that arise in value learning and reward-model evaluation under distribution shift, suggesting broader applicability across off-policy correction problems.

  \bibliographystyle{plainnat}
  \bibliography{main}

  \newpage
  \appendix

  \begin{center}
    \noindent\rule{\linewidth}{3pt}\\[6pt]
    {\Large\bfseries VESPO: Variational Sequence-level Soft Policy Optimization\\[0.4em] Supplementary Material}\\[6pt]
    \noindent\rule{\linewidth}{1pt}
  \end{center}
  \vspace{1em}

  \noindent\textbf{Table of Contents}
  \vspace{0.3em}
  \hrule
  \vspace{0.5em}
  \noindent
  \hyperref[app:implicit-proposals]{\ref{app:implicit-proposals}}\hspace{1em}Implicit Proposal Distributions of Existing Methods\dotfill\pageref{app:implicit-proposals}\\[0.3em]
  \hyperref[app:variational-derivation]{\ref{app:variational-derivation}}\hspace{1em}Derivation of the Proposal Distribution\dotfill\pageref{app:variational-derivation}\\[0.3em]
  \hyperref[app:variance-bound]{\ref{app:variance-bound}}\hspace{1em}Proof of the Variance Bound and Uniform Boundedness\dotfill\pageref{app:variance-bound}\\[0.3em]
  \hyperref[app:exp-details]{\ref{app:exp-details}}\hspace{1em}Experimental Details\dotfill\pageref{app:exp-details}\\[0.3em]
  \hyperref[app:length-normalization]{\ref{app:length-normalization}}\hspace{1em}Length Normalization Introduces Bias\dotfill\pageref{app:length-normalization}\\[0.3em]
  \hyperref[app:failure-modes]{\ref{app:failure-modes}}\hspace{1em}Additional Analysis of Baseline Failure Modes\dotfill\pageref{app:failure-modes}\\[0.3em]
  \hyperref[app:recent-baselines-dynamics]{\ref{app:recent-baselines-dynamics}}\hspace{1em}Training Dynamics for Recent Baseline Comparison\dotfill\pageref{app:recent-baselines-dynamics}\\[0.3em]
  \hyperref[app:code-dynamics]{\ref{app:code-dynamics}}\hspace{1em}Code Generation Training Dynamics\dotfill\pageref{app:code-dynamics}\\[0.3em]
  \hyperref[app:algorithm]{\ref{app:algorithm}}\hspace{1em}Algorithm Pseudocode\dotfill\pageref{app:algorithm}\\[0.3em]
  \hyperref[app:limitations]{\ref{app:limitations}}\hspace{1em}Limitations and Future Directions\dotfill\pageref{app:limitations}
  \vspace{0.5em}
  \hrule
  \vspace{1.5em}

  \section{Implicit Proposal Distributions of Existing Methods}
  \label{app:implicit-proposals}

  We analyze the implicit proposal distributions induced by existing reshaping strategies under the measure-change framework of \cref{sec:measure-change}.

  \paragraph{Token-level methods (GRPO).}
  GRPO applies clipping independently at each token position, yielding a gradient of the form
  \begin{equation}
    \nabla J_{\text{GRPO}} = \sum_{t=1}^{T} \mathbb{E}_{\tau \sim \mu}\bigl[\phi_t(\rho_t) \cdot A(\tau) \cdot \nabla \log \pi(y_t | x, y_{<t})\bigr],
  \end{equation}
  where $\phi_t(\rho_t)$ is the clipping function applied to the $t$-th token ratio.
  The key observation is that \textbf{different tokens within the same trajectory receive different weights} $\phi_t(\rho_t)$.
  In contrast, sequence-level methods assign a single weight $\phi(W)$ to the entire trajectory:
  \begin{equation}
    \nabla J_{\text{seq}} = \mathbb{E}_{\tau \sim \mu}\bigl[\phi(W) \cdot A(\tau) \cdot \sum_{t=1}^{T} \nabla \log \pi(y_t | x, y_{<t})\bigr].
  \end{equation}
  The token-level formulation cannot be expressed as importance sampling toward any single proposal distribution $Q$, because such a representation requires a uniform weight across all token gradients within each trajectory.
  This token-wise weighting breaks the coherence of sequence-level credit assignment: tokens in the same trajectory that share a common outcome receive inconsistent gradient signals.

  \paragraph{Token-level IS as a First-Order Approximation.}
  We now show that token-level IS is a first-order approximation to sequence-level IS, following \citet{Zheng2025StabilizingRL}.
  Consider the unclipped case where $\phi_t(\rho_t) = \rho_t$ and $\phi(W) = W$.
  The sequence-level gradient (without advantage for clarity) is
  \begin{equation}
    \nabla J_{\text{seq}} = \mathbb{E}_{\tau \sim \mu}\Bigl[W \cdot \sum_{t=1}^{T} \nabla \log \pi(y_t | \cdot)\Bigr] = \mathbb{E}_{\tau \sim \mu}\Bigl[\prod_{s=1}^{T} \rho_s \cdot \sum_{t=1}^{T} \nabla \log \pi(y_t | \cdot)\Bigr].
  \end{equation}
  When $\rho_t \approx 1$ for all $t$ (near on-policy), we can expand $W = \prod_s \rho_s$ via Taylor series:
  \begin{equation}
    W = \prod_{s=1}^{T} \rho_s \approx 1 + \sum_{s=1}^{T} (\rho_s - 1) + \sum_{s < s'} (\rho_s - 1)(\rho_{s'} - 1) + \cdots
  \end{equation}
  Substituting into the gradient and keeping only first-order terms in $(\rho_s - 1)$:
  \begin{align}
    \nabla J_{\text{seq}} & \approx \mathbb{E}_{\tau \sim \mu}\Bigl[\Bigl(1 + \sum_{s=1}^{T} (\rho_s {-} 1)\Bigr) \sum_{t=1}^{T} \nabla \log \pi(y_t | \cdot)\Bigr]                                                                                                                                     \\
                          & = \underbrace{\mathbb{E}_{\tau \sim \mu}\Bigl[\sum_{t=1}^{T} \nabla \log \pi(y_t | \cdot)\Bigr]}_{\text{REINFORCE (no IS)}} + \underbrace{\sum_{s,t} \mathbb{E}_{\tau \sim \mu}\bigl[(\rho_s {-} 1) \nabla \log \pi(y_t | \cdot)\bigr]}_{\text{first-order IS correction}}.
  \end{align}
  The token-level gradient is
  \begin{equation}
    \nabla J_{\text{tok}} = \sum_{t=1}^{T} \mathbb{E}_{\tau \sim \mu}\bigl[\rho_t \cdot \nabla \log \pi(y_t | \cdot)\bigr] = \mathbb{E}_{\tau \sim \mu}\Bigl[\sum_{t=1}^{T} \nabla \log \pi(y_t | \cdot)\Bigr] + \sum_{t=1}^{T} \mathbb{E}_{\tau \sim \mu}\bigl[(\rho_t {-} 1) \nabla \log \pi(y_t | \cdot)\bigr].
  \end{equation}
  Comparing, we see that token-level IS \textbf{only retains the diagonal terms} $(s = t)$ of the first-order correction, discarding cross-token interactions where $s \neq t$.
  The approximation error is
  \begin{equation}
    \label{eq:token-approx-error}
    \nabla J_{\text{seq}} - \nabla J_{\text{tok}} \approx \sum_{s \neq t} \mathbb{E}_{\tau \sim \mu}\bigl[(\rho_s - 1) \nabla \log \pi(y_t | \cdot)\bigr] + O\bigl((\rho - 1)^2\bigr).
  \end{equation}
  This error captures the fact that changing the policy at position $s$ affects the importance of the gradient at position $t$, a cross-token dependency that token-level methods ignore.

  \begin{remark}[When Token-Level Approximation is Reasonable]
    The approximation $\nabla J_{\text{tok}} \approx \nabla J_{\text{seq}}$ is reasonable when:
    \begin{enumerate}[itemsep=0pt, topsep=2pt]
      \item \textbf{Near on-policy}: $\rho_t \approx 1$ for all $t$, so higher-order terms and cross-token terms are small.
      \item \textbf{Short sequences}: Fewer cross-token pairs $(s, t)$ with $s \neq t$ means smaller accumulated error.
    \end{enumerate}
    In practice, RL for LLMs often violates both conditions: off-policy updates are common, and reasoning tasks require long sequences.
    This motivates sequence-level methods that preserve the full product structure of importance weights.
  \end{remark}

  \paragraph{Length normalization (GSPO).}
  GSPO uses the geometric mean $\phi(W) = W^{1/T}$, which induces a proposal distribution that explicitly depends on sequence length:
  \begin{equation}
    Q_{\text{GSPO}}(\tau) \propto \mu(\tau)^{1-1/T} \cdot \pi(\tau)^{1/T}.
  \end{equation}
  As $T \to \infty$, $Q_{\text{GSPO}} \to \mu$, meaning longer sequences receive vanishingly small corrections toward the target policy.
  See \cref{app:length-normalization} for a detailed analysis of the length-dependent bias this introduces.

  \paragraph{Sequence-level hard clipping.}
  If one were to apply hard clipping at the sequence level (truncating $W$ at threshold $c$), the reshaping function would be $\phi(W) = \min(W, c)$, inducing
  \begin{equation}
    Q_{\text{clip}}(\tau) \propto \min\bigl(\pi(\tau), c \cdot \mu(\tau)\bigr).
  \end{equation}
  This is a \emph{truncated distribution}: trajectories with $W > c$ are capped rather than weighted proportionally.
  The discontinuity at $W = c$ can cause optimization difficulties when trajectories cross the boundary during training.
  Note that standard PPO applies clipping at the token level rather than the sequence level, combining the issues of both token-level weighting and hard truncation.


  \section{Derivation of the Proposal Distribution}
  \label{app:variational-derivation}

  We derive the closed-form solution to the constrained optimization problem in \cref{eq:constrained-opt}.

  \begin{proposition}[Solution to the Constrained Problem]
    \label{prop:proposal-solution}
    The solution to
    \begin{align}
      \min_{Q} \;    & D_{\mathrm{KL}}(Q \| \mu) - \alpha\, \mathbb{E}_Q[\log W] \notag   \\
      \text{s.t.} \; & \mathbb{E}_{Q}[W] \leq C, \quad \textstyle\int Q(\tau)\, d\tau = 1
    \end{align}
    is given by
    \begin{equation}
      Q^*(\tau) = \frac{1}{Z}\, \mu(\tau)\, W(\tau)^{\alpha}\, \exp(-\lambda W(\tau)),
    \end{equation}
    where $\lambda \geq 0$ is the Lagrange multiplier for the moment constraint and $Z$ is the normalization constant.
  \end{proposition}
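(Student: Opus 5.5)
The plan is to treat this as a convex program over densities and use the Gibbs variational principle, with Lagrangian duality handling the inequality constraint. The objective $D_{\mathrm{KL}}(Q\|\mu) - \alpha\,\mathbb{E}_Q[\log W]$ is strictly convex in $Q$, since the KL term is strictly convex and the tilt term is linear. The constraints $\mathbb{E}_Q[W]\le C$ and $\int Q = 1$ are linear, so the feasible set is convex. A minimizer, if one exists, is therefore unique, and KKT conditions are sufficient once we exhibit a feasible $Q^*$ with a multiplier $\lambda\ge 0$ satisfying stationarity and complementary slackness.

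First, fix $\lambda \ge 0$ and minimize the Lagrangian $\mathcal{L}$ of \cref{eq:lagrangian} over probability densities. Rather than rely only on the formal functional derivative
\begin{equation*}
\log Q - \log\mu + 1 - \alpha\log W + \lambda W + \gamma = 0,
\end{equation*}
we will rewrite the $Q$-dependent part exactly. Define
\begin{equation*}
Q_\lambda(\tau) = \mu(\tau)\,W(\tau)^{\alpha}e^{-\lambda W(\tau)}/Z_\lambda,
\qquad
Z_\lambda = \mathbb{E}_\mu\!\left[W^\alpha e^{-\lambda W}\right].
\end{equation*}
Then, for every density $Q$,
\begin{equation*}
D_{\mathrm{KL}}(Q\|\mu) - \alpha\,\mathbb{E}_Q[\log W] + \lambda\,\mathbb{E}_Q[W]
= D_{\mathrm{KL}}(Q\|Q_\lambda) - \log Z_\lambda .
\end{equation*}
This is minimized uniquely at $Q = Q_\lambda$, and the normalization multiplier $\gamma$ is absorbed into $Z_\lambda$. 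For this to make sense we need $Z_\lambda<\infty$. That holds for $\lambda>0$ because $w^\alpha e^{-\lambda w}$ is bounded on $w>0$. For $\lambda=0$ it requires $\mathbb{E}_\mu[W^\alpha] = \mathbb{E}_\mu[\mu^{1-\alpha}\pi^\alpha]<\infty$, which is automatic on the finite trajectory space of an LLM.

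Second, we choose $\lambda$ by complementary slackness. If $\mathbb{E}_{Q_0}[W]\le C$, take $\lambda=0$. Otherwise, let
\begin{equation*}
m(\lambda) = \mathbb{E}_{Q_\lambda}[W] = -\tfrac{d}{d\lambda}\log Z_\lambda .
\end{equation*}
It is continuous and nonincreasing in $\lambda$, because $m'(\lambda) = -\mathrm{Var}_{Q_\lambda}(W)\le 0$. As $\lambda\to\infty$, $Q_\lambda$ concentrates on the smallest values of $W$, so $m(\lambda)$ decreases to $\operatorname{ess\,inf} W$. Provided $C$ exceeds that infimum, which is a Slater-type feasibility assumption we will state, the intermediate value theorem gives $\lambda^*>0$ with $m(\lambda^*)=C$.

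Finally, we verify optimality by weak duality. For any feasible $Q$,
\begin{equation*}
\mathcal{J}(Q) \ge \mathcal{J}(Q) + \lambda^*\bigl(\mathbb{E}_Q[W]-C\bigr) \ge \mathcal{J}(Q_{\lambda^*}) + \lambda^*\bigl(\mathbb{E}_{Q_{\lambda^*}}[W]-C\bigr) = \mathcal{J}(Q_{\lambda^*}).
\end{equation*}
The first inequality uses $\lambda^*\ge 0$ and feasibility of $Q$. The second uses that $Q_{\lambda^*}$ minimizes the Lagrangian. The final equality is complementary slackness. Hence $Q^* = Q_{\lambda^*}$.

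The main obstacle is the existence step, not the computation. One part is guaranteeing $Z_\lambda<\infty$ and the integrability of $\log W$ under $Q$, which is needed for the KL rewriting. The other is securing a $\lambda$ that meets the constraint with equality. I would handle this by restricting to the finite or discrete trajectory space, where every expectation is a finite sum, and by stating the Slater condition $C>\min W$ explicitly. In the general case I would note that the same argument goes through under the integrability assumption $\mathbb{E}_\mu[W^\alpha]<\infty$.
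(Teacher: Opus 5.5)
Your argument is correct, and it takes a genuinely different and more complete route than the paper.

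\textbf{What the paper does.} Its proof is a purely formal first-order computation. It writes the Lagrangian, sets $\delta\mathcal{L}/\delta Q = 0$, solves for $\log Q^*$, and reads off the Gibbs form, leaving $\lambda$ as an unspecified multiplier. It never argues that this stationary point is the global minimizer. Nor does it show that some $\lambda \ge 0$ compatible with feasibility and complementary slackness exists.

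\textbf{What you do differently.} You replace the stationarity step with the exact identity $\mathcal{J}(Q) + \lambda\,\mathbb{E}_Q[W] = D_{\mathrm{KL}}(Q\|Q_\lambda) - \log Z_\lambda$. This gives global, unique minimization of the Lagrangian in one line and absorbs $\gamma$ into $Z_\lambda$. You then choose $\lambda$ through the monotone map $m(\lambda) = \mathbb{E}_{Q_\lambda}[W]$, with $m'(\lambda) = -\mathrm{Var}_{Q_\lambda}(W)$, and close with weak duality.

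\textbf{What each route buys.} The paper's route buys brevity. It exhibits the functional form $\phi(W) = W^{\alpha}e^{-\lambda W}$, which is all that is used downstream, since $(c_1, c_2)$ are tuned anyway. Yours buys an actual theorem, in two ways.
\begin{itemize}
\item It surfaces hypotheses the statement silently needs. If $C < \min_\tau W(\tau)$, no $Q \ll \mu$ is feasible. If $C = \min_\tau W(\tau)$ with $W$ nonconstant, the minimizer is $\mu$ conditioned on $\arg\min W$, which is not of the stated form for any finite $\lambda$.
\item It characterizes $\lambda$ as the root of $\mathbb{E}_{Q_\lambda}[W] = C$. This root is unique when $W$ is nonconstant and decreases in $C$, which makes precise how the variance budget sets the kernel's decay rate.
\end{itemize}

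One small slip: $\mathbb{E}_\mu[W^\alpha] = \sum_\tau \mu(\tau)^{1-\alpha}\pi(\tau)^{\alpha}$, not $\mathbb{E}_\mu[\mu^{1-\alpha}\pi^{\alpha}]$.
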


  \begin{proof}
    The Lagrangian, after expanding $D_{\mathrm{KL}}(Q\|\mu) = \int Q\log(Q/\mu)\, d\tau$, is
    \begin{align}
      \mathcal{L}(Q, \lambda, \gamma)
      = \int Q(\tau) \bigl[ & \log Q(\tau) - \log \mu(\tau) - \alpha \log W(\tau) \notag            \\
                            & + \lambda W(\tau) + \gamma \bigr] d\tau \;-\; \gamma \;-\; \lambda C.
    \end{align}
    Taking the functional derivative with respect to $Q$ and setting it to zero:
    \begin{align}
      \frac{\delta \mathcal{L}}{\delta Q} = \log Q(\tau) + 1 - \log \mu(\tau) - \alpha \log W(\tau) + \lambda W(\tau) + \gamma = 0.
    \end{align}
    Solving for $Q$:
    \begin{align}
      \log Q^*(\tau) & = \log \mu(\tau) + \alpha \log W(\tau) - \lambda W(\tau) + \mathrm{const} \\
      Q^*(\tau)      & \propto \mu(\tau)\, W(\tau)^{\alpha}\, \exp(-\lambda W(\tau)).
    \end{align}
    Comparing with $Q(\tau) = \frac{1}{Z} \mu(\tau)\, \phi(W(\tau))$ from \cref{eq:proposal-def}, we identify
    \begin{equation}
      \phi(W) = W^{\alpha} \cdot \exp(-\lambda W). \qedhere
    \end{equation}
  \end{proof}

  \paragraph{Equivalent dual-KL form.}
  Using the identity $D_{\mathrm{KL}}(Q\|\pi) = D_{\mathrm{KL}}(Q\|\mu) - \mathbb{E}_Q[\log W]$, the objective in \cref{eq:vespo-objective} can equivalently be written as
  \begin{equation}
    \label{eq:dual-kl}
    \mathcal{J}(Q) = (1{-}\alpha)\, D_{\mathrm{KL}}(Q \| \mu) + \alpha\, D_{\mathrm{KL}}(Q \| \pi),
  \end{equation}
  which exposes the geometric interpretation: the proposal $Q$ interpolates between $\mu$ ($\alpha{=}0$) and $\pi$ ($\alpha{=}1$). Both forms yield the same $Q^*$; we use the single-KL form in the main text because $D_{\mathrm{KL}}(Q\|\mu)$ retains unit weight in the deployed regime $\alpha \geq 1$.

  \paragraph{Surrogate Objective.}
  The reshaped gradient (\cref{eq:reshaped-pg}) corresponds to maximizing an implicit surrogate. Using $\nabla_\theta W = W \nabla_\theta \log \pi_\theta$, the estimator can be rewritten as
  \begin{equation}
    \mathbb{E}_{\tau \sim \mu}\bigl[\phi(W) \cdot A \cdot \nabla \log \pi_\theta\bigr] = \mathbb{E}_{\tau \sim \mu}\bigl[\tfrac{\phi(W)}{W} \cdot A \cdot \nabla W\bigr].
  \end{equation}
  Defining $f$ by $f'(W) = \phi(W)/W$, this equals $\nabla_\theta \mathbb{E}_{\tau \sim \mu}[f(W)\, A(\tau)]$, since $A(\tau)$ does not depend on $\theta$. Hence VESPO implicitly maximizes
  \begin{equation}
    \mathcal{J}_{\text{VESPO}}(\theta) = \mathbb{E}_{\tau \sim \mu}\bigl[f(W(\theta))\, A(\tau)\bigr].
  \end{equation}
  For $\phi(W) = W^\alpha \exp(-\lambda W)$, $f'(W) = W^{\alpha-1}\exp(-\lambda W)$ integrates to the \emph{lower incomplete gamma function}:
  \begin{equation}
    f(W) = \frac{1}{\lambda^\alpha}\, \gamma(\alpha, \lambda W), \qquad \gamma(a, x) = \int_0^x t^{a-1} e^{-t}\, dt.
  \end{equation}
  This form is smooth and infinitely differentiable, and saturates as $W \to \infty$, providing a principled soft alternative to hard clipping.

  \paragraph{Shifted Form.}
  In practice, we use the shifted form $\phi(W) = W^{c_1} \exp(c_2(1 - W))$, which satisfies $\phi(1) = 1$.
  This ensures that on-policy samples receive unit weight.
  We treat $(c_1, c_2)$ as tunable hyperparameters, allowing flexibility beyond the specific values implied by the variational derivation.


  \section{Proof of the Variance Bound and Uniform Boundedness}
  \label{app:variance-bound}

  We prove \cref{prop:variance-bound} from \cref{sec:variance-guarantee} and the supporting boundedness result for the deployed kernel.

  \begin{proof}[Proof of \cref{prop:variance-bound}]
    By \cref{eq:proposal-def}, $Q(\tau) = \mu(\tau) \phi(W(\tau))/Z$, so for any measurable $g: \mathbb{R}_{>0} \to \mathbb{R}$:
    \begin{equation}
      \label{eq:meas-change-id}
      \mathbb{E}_\mu\bigl[\phi(W) \cdot g(W)\bigr] = Z \cdot \mathbb{E}_Q\bigl[g(W)\bigr].
    \end{equation}
    Setting $g = \phi$ gives $\mathbb{E}_\mu[\phi(W)^2] = Z \cdot \mathbb{E}_Q[\phi(W)]$. Since $\phi(w) = (\phi(w)/w) \cdot w \leq K \cdot w$ for all $w > 0$ where $K = \sup_{w>0} \phi(w)/w$, we have
    \begin{equation}
      \mathbb{E}_Q[\phi(W)] \leq K \cdot \mathbb{E}_Q[W] \leq K \cdot C,
    \end{equation}
    using the moment constraint $\mathbb{E}_Q[W] \leq C$. Combining yields $\mathbb{E}_\mu[\phi(W)^2] \leq Z \cdot K \cdot C$.

    \textbf{Finiteness and closed form of $K$.} We have $\phi(w)/w = w^{c_1-1}\exp(c_2(1-w))$. As $w \to 0^+$, this tends to $\infty$ when $c_1 < 1$ (since $w^{c_1-1} \to \infty$), to $\exp(c_2)$ when $c_1 = 1$, and to $0$ when $c_1 > 1$. As $w \to \infty$, the exponential dominates and the ratio tends to $0$ for any $c_2 > 0$. Hence $K < \infty$ iff $c_1 \geq 1$.

    For $c_1 > 1$, $h(w) := (c_1-1)\log w + c_2(1-w)$ has derivative $h'(w) = (c_1-1)/w - c_2$, vanishing at $w^* = (c_1-1)/c_2 > 0$. The second derivative $-(c_1-1)/w^2 < 0$ confirms an interior maximum. Substituting:
    \begin{equation}
      K = (w^*)^{c_1-1} \exp\!\bigl(c_2(1-w^*)\bigr) = \left(\frac{c_1-1}{c_2}\right)^{c_1-1} \exp(c_2 - c_1 + 1),
    \end{equation}
    yielding the stated formula. For $c_1 = 1$ the supremum $K = \exp(c_2)$ is approached as $w \to 0^+$.
  \end{proof}

  \begin{proposition}[Uniform Boundedness of $\phi$]
    \label{prop:phi-bounded}
    For any $c_1, c_2 > 0$, the kernel $\phi(W) = W^{c_1}\exp(c_2(1-W))$ satisfies
    \begin{equation}
      \sup_{W>0} \phi(W) \;=\; \phi_{\max} \;=\; \left(\frac{c_1}{c_2}\right)^{c_1} \exp(c_2 - c_1),
    \end{equation}
    attained at $W^{**} = c_1/c_2$.
  \end{proposition}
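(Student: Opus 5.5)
The plan is to mirror the finiteness argument for $K$ in the proof of \cref{prop:variance-bound}, now applied to $\phi$ itself rather than to $\phi(w)/w$. Since $\phi>0$ on $(0,\infty)$, it suffices to maximize the log-kernel
\begin{equation*}
  \ell(W) := \log\phi(W) = c_1\log W + c_2(1-W), \qquad W>0.
\end{equation*}

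First, establish the boundary behaviour. As $W\to 0^+$, $c_1\log W\to-\infty$ because $c_1>0$, so $\phi(W)\to 0$. As $W\to\infty$, the linear term $-c_2W$ dominates the logarithm because $c_2>0$, so again $\phi(W)\to 0$. Together with continuity and the positive value $\phi(1)=1$, this shows the supremum is finite and is attained at an interior point of $(0,\infty)$.

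Second, locate the maximizer. Compute $\ell'(W)=c_1/W - c_2$, which has the unique zero $W^{**}=c_1/c_2>0$. The second derivative is $\ell''(W)=-c_1/W^2<0$, so $\ell$ is strictly concave. Hence $W^{**}$ is the unique global maximizer of $\ell$, and therefore of $\phi$.

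Third, evaluate the maximum: $\phi(W^{**})=(c_1/c_2)^{c_1}\exp\bigl(c_2(1-c_1/c_2)\bigr)=(c_1/c_2)^{c_1}\exp(c_2-c_1)$, which is the stated $\phi_{\max}$.

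No step is a genuine obstacle. The only point needing care is that, unlike the $c_1<1$ case for $K$, the power term $W^{c_1}$ with $c_1>0$ already vanishes at $0$. So no case split on $c_1$ is needed, and the condition $c_1,c_2>0$ suffices throughout.
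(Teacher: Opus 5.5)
Your proposal is correct and follows essentially the same route as the paper: maximize $\log\phi(W) = c_1\log W + c_2(1-W)$, find the unique critical point $W^{**}=c_1/c_2$, use $-c_1/W^2<0$ to confirm the maximum, and substitute. Your boundary analysis and explicit appeal to strict concavity, which makes the critical point a global rather than merely local maximizer, are additions that state more explicitly what the paper's shorter argument leaves implicit.
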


  \begin{proof}
    $\log \phi(W) = c_1 \log W + c_2(1-W)$ has derivative $c_1/W - c_2$, vanishing at $W^{**} = c_1/c_2$. The second derivative $-c_1/W^2 < 0$ confirms a maximum. Substituting yields $\phi_{\max} = (c_1/c_2)^{c_1}\exp(c_2-c_1)$.
  \end{proof}

  \textbf{Numerical values at practical settings.} For $(c_1, c_2) = (2, 3)$: $K = (1/3) e^2 \approx 2.46$ and $\phi_{\max} = (2/3)^2 e \approx 1.21$. For $(c_1, c_2) = (3, 2)$: $K = 1$ and $\phi_{\max} = (3/2)^3 e^{-1} \approx 1.24$. The two propositions together establish that the deployed kernel offers (i) tight second-moment control under the moment constraint and (ii) uniformly bounded gradient contribution from any single off-policy sample, regardless of staleness.


  \section{Experimental Details}
  \label{app:exp-details}

  \paragraph{Training infrastructure.}
  All experiments run on 32 NVIDIA H20 GPUs with veRL~\citep{sheng2024hybridflow}, using vLLM 0.11.0~\citep{kwon2023efficient} for rollout inference, FSDP~\citep{zhao2023pytorch} for the dense models (Llama-3.2-3B-Instruct, Qwen3-8B-Base), and Megatron~\citep{shoeybi2019megatron} for the MoE model (Qwen3-30B-A3B-Base). Each run executes $1{,}500$ gradient steps with maximum sequence length $16{,}384$ tokens and 8 responses per query.

  \paragraph{Reward and evaluation protocol.}
  Math reward is verifier-based via Math-Verify~\citep{mathverify2025}; code reward is execution-based (test case pass/fail). Math evaluation reports avg@$k$ with $k{=}32$ for AIME, $k{=}16$ for AMC, and $k{=}4$ for MATH-500; code evaluation reports pass@10. The best checkpoint per method is selected by average accuracy across the corresponding evaluation suite.

  \paragraph{Baseline hyperparameters.}
  We use the official hyperparameters published with each method:
  \begin{itemize}[itemsep=0pt, topsep=2pt]
    \item \textbf{GRPO}~\citep{shao2024deepseekmath}: token-level PPO clipping with asymmetric bounds $(0.2, 0.28)$ following DAPO~\citep{yu2025dapo}.
    \item \textbf{GSPO}~\citep{Zheng2025GroupSP}: sequence-level with $1/T$ length normalization, clipping bounds $(3\text{e-}4,\, 4\text{e-}4)$.
    \item \textbf{SAPO}~\citep{gao2025softadaptivepolicyoptimization}: token-level soft gating, $\tau_{\text{pos}}{=}1.0$, $\tau_{\text{neg}}{=}1.05$.
  \end{itemize}
  VESPO uses asymmetric $(c_1, c_2) = (2.0, 3.0)$ for $A{>}0$ and $(3.0, 2.0)$ for $A{<}0$, applied identically across all models and training tasks (no retuning between math and code).

  \paragraph{Staleness simulation.}
  We fix mini-batch size $256$ and vary global batch size to obtain staleness ratio $N = \text{gbs}/\text{mbs}$. Each rollout batch is split into $N$ mini-batches for sequential gradient updates, so later mini-batches use parameters increasingly stale relative to the rollout policy. Primary experiments use $N{=}8$; staleness ablations span $N \in \{4, 8, 16, 32, 64\}$.


  \section{Length Normalization Introduces Bias}
  \label{app:length-normalization}

  We analyze how length normalization in sequence-level importance sampling introduces length-dependent bias that conflates distinct trajectories.

  \paragraph{Length-Dependent Bias in GSPO.}
  GSPO uses $\phi(W) = W^{1/T}$, which induces a proposal distribution that explicitly depends on the sequence length $T$:
  \begin{equation}
    Q_{\text{GSPO}}(\tau) \propto \mu(\tau)^{1-1/T} \cdot \pi(\tau)^{1/T}.
  \end{equation}
  As $T \to \infty$, $Q_{\text{GSPO}} \to \mu$: the normalized weight converges to $\exp(\mathbb{E}[\log \rho_t])$, a constant independent of the specific trajectory.
  This causes two fundamental problems:
  \begin{enumerate}[itemsep=2pt, topsep=2pt]
    \item \textbf{Signal dissipation}: All weights collapse toward a constant, losing discriminative power.
    \item \textbf{Conflation of distinct sequences}: Sequences with identical per-token statistics but different lengths receive identical weights, despite having different true importance weights.
  \end{enumerate}

  \begin{proposition}[Conflation under Length Normalization]
    \label{prop:gspo-conflation}
    For any two sequences $\tau_1$, $\tau_2$ with lengths $T_1 \neq T_2$, if $\frac{\log W_1}{T_1} = \frac{\log W_2}{T_2}$, then GSPO assigns identical weights: $\phi_{\text{GSPO}}(W_1) = \phi_{\text{GSPO}}(W_2)$.
    However, their true importance weights differ: $W_1 = e^{T_1 \bar{r}}$ vs $W_2 = e^{T_2 \bar{r}}$ for $\bar{r} = \frac{\log W_1}{T_1} = \frac{\log W_2}{T_2}$.
  \end{proposition}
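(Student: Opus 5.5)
The plan is to compute both weights directly from the definition of $\phi_{\text{GSPO}}$ in \cref{eq:gspo-phi}, rewritten in terms of the sequence-level log-weight. Since $W=\prod_{t}\rho_t$, we have $\sum_t \log\rho_t = \log W$, so
\[
\phi_{\text{GSPO}}(W) = \exp\!\Bigl(\tfrac{1}{T}\log W\Bigr).
\]
The first claim follows at once from this. Set $\bar r := \log W_1/T_1 = \log W_2/T_2$, which is well defined by hypothesis. Then $\phi_{\text{GSPO}}(W_1) = e^{\bar r} = \phi_{\text{GSPO}}(W_2)$.

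For the second claim, I would invert the definition of $\bar r$ to get $\log W_i = T_i \bar r$, that is, $W_i = e^{T_i\bar r}$ for $i=1,2$. These are the stated expressions for the true importance weights. The ratio is $W_1/W_2 = e^{(T_1-T_2)\bar r}$.

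The only genuine subtlety is the word ``differ.'' This ratio equals $1$ exactly when $\bar r = 0$, i.e., when both sequences are on-policy in aggregate ($W_1=W_2=1$). I would therefore note that for $\bar r\neq 0$ and $T_1\neq T_2$, we get $W_1\neq W_2$, and indeed $|\log W_1-\log W_2| = |T_1-T_2|\,|\bar r|$. The discrepancy grows linearly in the length gap, so it is exponential in $W$ itself. In the degenerate case $\bar r=0$, both weights are $1$ and no conflation of distinct weights occurs. I would state this caveat explicitly rather than claim the weights differ unconditionally.

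No earlier result is needed beyond the definitions in \cref{eq:seq-weight} and \cref{eq:gspo-phi}. The main obstacle is simply handling this $\bar r=0$ edge case honestly.
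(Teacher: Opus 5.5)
Your proposal is correct and matches the paper's argument: the paper gives no separate proof, and the claim follows by exactly your direct substitution $\phi_{\text{GSPO}}(W_i)=\exp(\log W_i/T_i)=e^{\bar r}$ and $W_i=e^{T_i\bar r}$. Your caveat is also a legitimate correction to the statement. As written, ``differ'' requires the extra hypothesis $\bar r\neq 0$ (equivalently $W_1\neq 1$), because for $\bar r=0$ both true weights equal $1$ and nothing is conflated.
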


  This conflation is problematic: a short sequence that is moderately off-policy and a long sequence that is severely off-policy may receive identical gradient weights, even though their contributions to the policy gradient should differ substantially.

  \paragraph{VESPO Avoids Length-Dependent Bias.}
  VESPO uses $\phi(W) = W^{c_1} \exp(c_2(1{-}W))$ without any length normalization.
  The reshaping function depends only on the sequence-level importance weight $W$, not on the sequence length $T$.
  Two sequences with the same average per-token log-ratio $\bar{r}$ but different lengths receive different weights: the longer sequence has $W = e^{T\bar{r}}$, which is appropriately transformed by $\phi$.
  This preserves the discriminative power of the importance weight while the soft-shaping kernel controls variance through exponential suppression of extreme weights.


  \section{Additional Analysis of Baseline Failure Modes}
  \label{app:failure-modes}

  This section provides supplementary visualizations comparing $N{=}4$ vs $N{=}8$ for each baseline (the comprehensive training dynamics figure across all $N$ values is in the main text, \cref{fig:training-dynamics}).

  \begin{figure}[h]
    \centering
    \includegraphics[width=\textwidth]{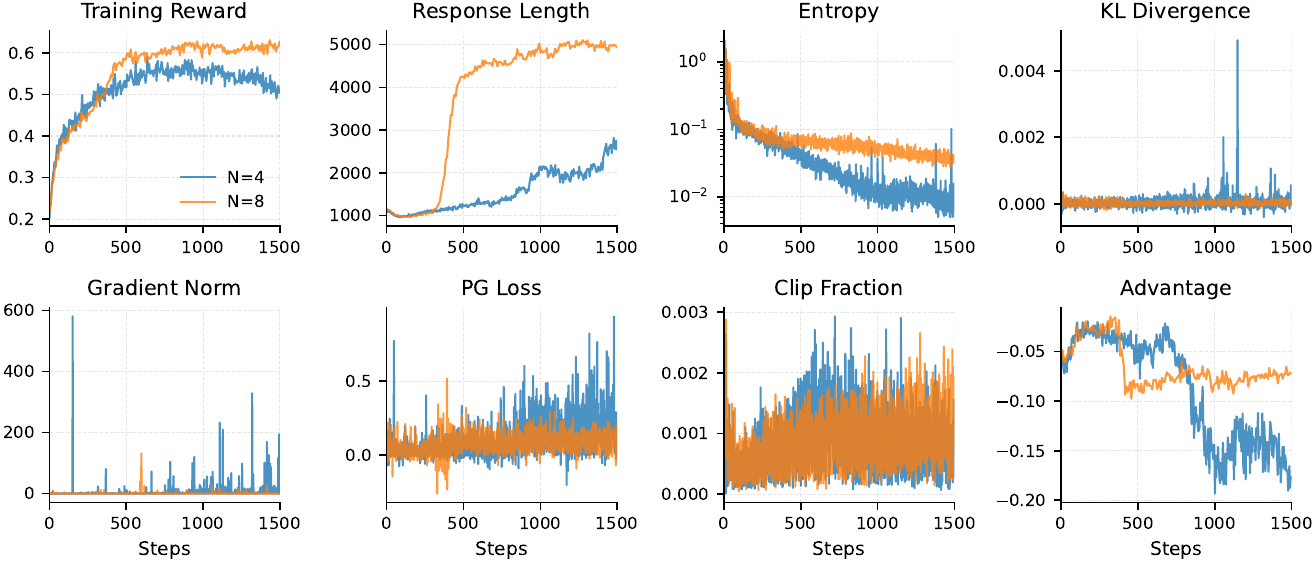}
    \caption{GRPO: $N=4$ (blue) vs $N=8$ (orange). Entropy decreases more rapidly at $N=4$, limiting exploration.}
    \label{fig:grpo-analysis}
  \end{figure}

  \begin{figure}[h]
    \centering
    \includegraphics[width=\textwidth]{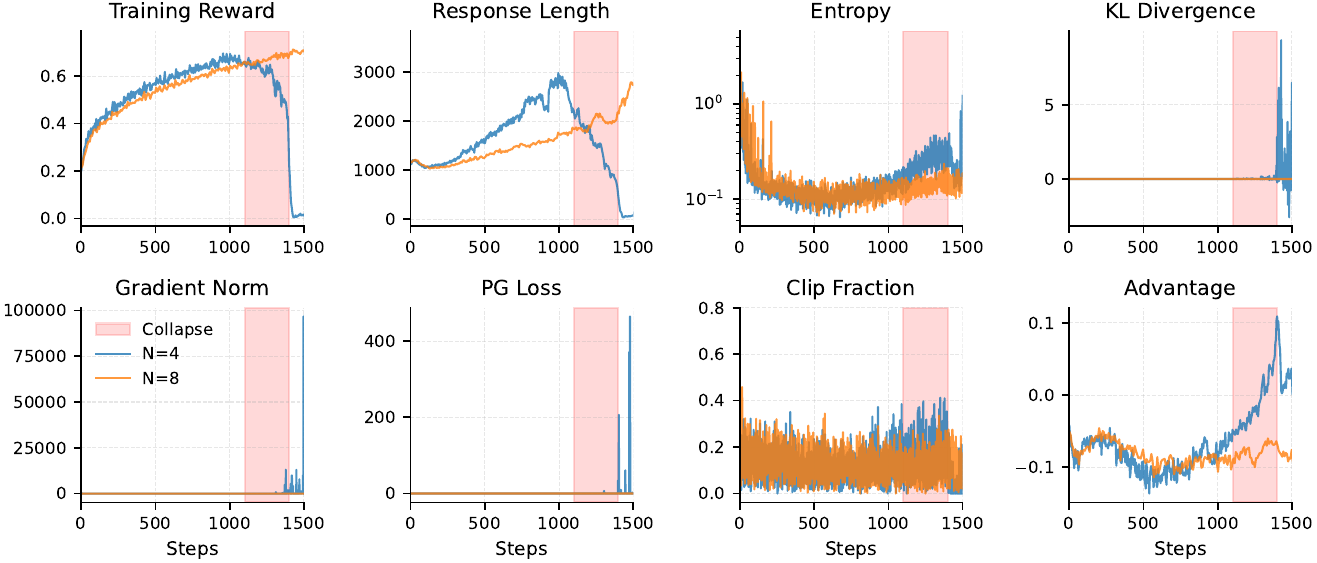}
    \caption{GSPO: $N=4$ (blue) vs $N=8$ (orange). Response length spikes above 3{,}000 tokens at $N=4$ before collapse.}
    \label{fig:gspo-analysis}
  \end{figure}

  \begin{figure}[h]
    \centering
    \includegraphics[width=\textwidth]{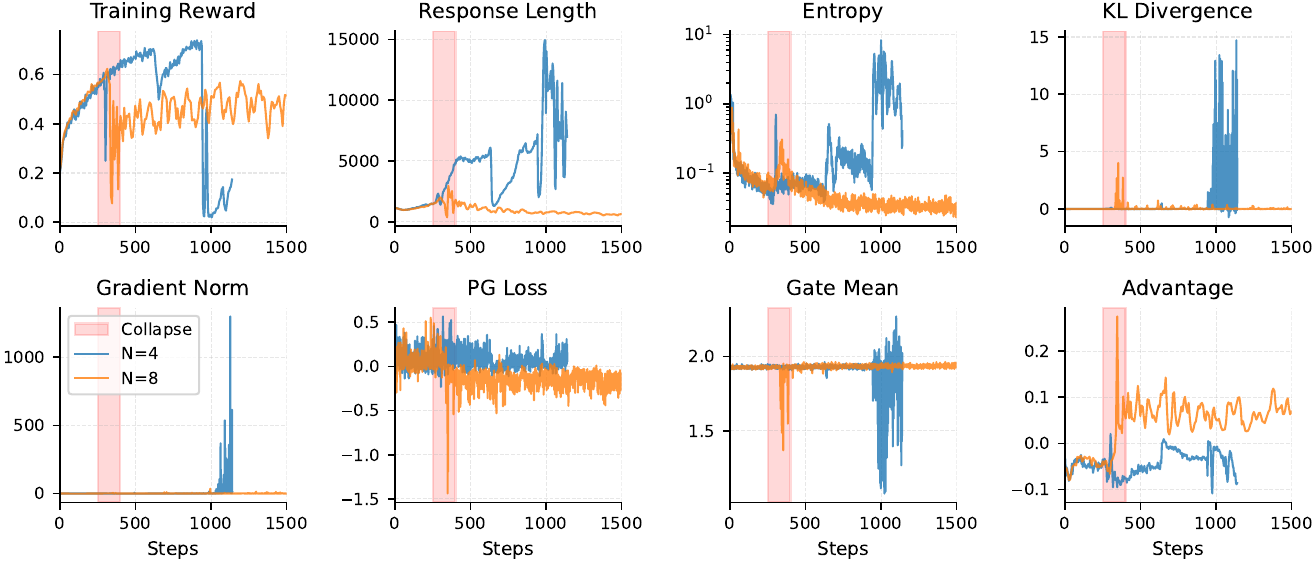}
    \caption{SAPO: $N=4$ (blue) vs $N=8$ (orange). Training reward collapses at $N=8$ due to insufficient suppression for negative advantages.}
    \label{fig:sapo-analysis}
  \end{figure}


  \section{Training Dynamics for Recent Baseline Comparison}
  \label{app:recent-baselines-dynamics}

  \cref{fig:recent-baselines} shows the training dynamics of TOPR, CISPO, BAPO, and VESPO under the matched setup of \cref{sec:recent-baselines}. CISPO collapses around step 280 due to insufficient sequence-level variance control; BAPO exhibits entropy explosion after step 1{,}100 as adaptive clip bounds over-relax; TOPR converges slowly with suppressed response length. VESPO is the only method that maintains stable training across all metrics.

  \begin{figure}[h]
    \centering
    \includegraphics[width=\textwidth]{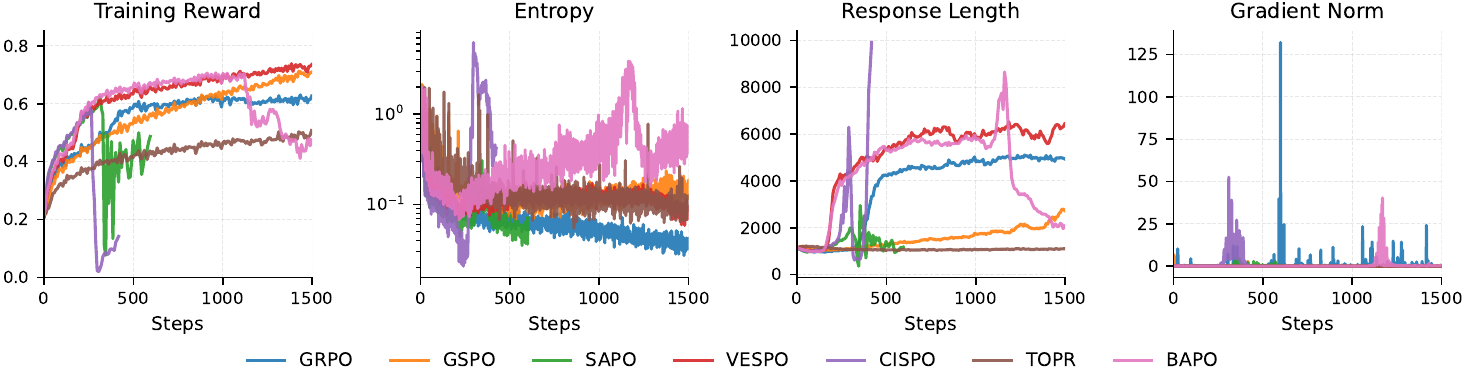}
    \caption{Training dynamics of recent importance weight reshaping methods on Qwen3-30B-A3B-Base ($N{=}8$). CISPO collapses early; BAPO experiences entropy explosion; TOPR converges slowly with suppressed length. VESPO is the only method maintaining stable training across all metrics.}
    \label{fig:recent-baselines}
  \end{figure}


  \section{Code Generation Training Details and Dynamics}
  \label{app:code-dynamics}

  \paragraph{Setup recap.}
  We train Qwen3-30B-A3B-Base on PRIME-RL/Eurus-2-RL-Data~\citep{cui2025process} using the same VESPO hyperparameters $(c_1,c_2)=(2,3)/(3,2)$ as math, without any retuning. The reward is execution-based (test case pass/fail), structurally distinct from the verifier-based math reward. We evaluate on HumanEval+, MBPP+~\citep{liu2023humanevalplus}, and LiveCodeBench v6~\citep{jain2024livecodebench}, reporting pass@10 (\cref{tab:code-transfer}). VESPO is the best on all three benchmarks with zero tuning, demonstrating that the variational framework yields hyperparameters that generalize across reward modalities.

  \paragraph{Training dynamics.}
  \cref{fig:code-dynamics} shows the training dynamics on PRIME-RL/Eurus-2-RL-Data. VESPO maintains the highest training reward with stable entropy and response length throughout 1{,}500 steps. SAPO trains well initially but collapses around step 1{,}250. GRPO and GSPO converge slowly to lower reward levels.

  \begin{figure}[!htbp]
    \centering
    \includegraphics[width=\textwidth]{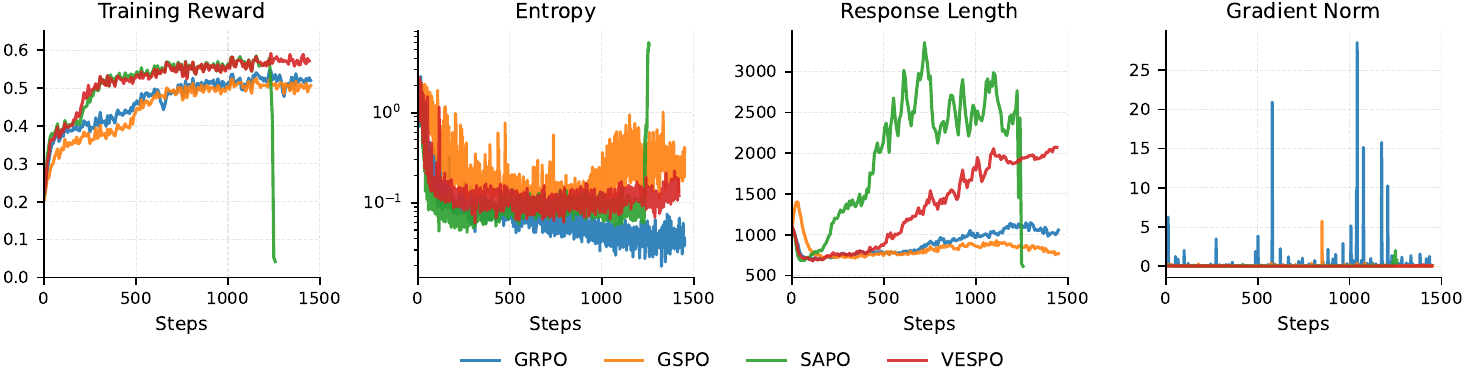}
    \caption{Training dynamics on code generation (PRIME-RL/Eurus-2-RL-Data, Qwen3-30B-A3B-Base, $N{=}8$). VESPO is the only method maintaining stable training; SAPO collapses around step 1{,}250.}
    \label{fig:code-dynamics}
  \end{figure}


  \newpage
  \section{Algorithm Pseudocode}
  \label{app:algorithm}

  \begin{lstlisting}[style=pythonstyle, title={\textbf{VESPO Policy Loss}}]
def compute_policy_loss_vespo(log_pi, log_mu, advantages, mask, c_pos, c_neg):
  """
  Args:
      log_pi: log probs from current policy, shape (batch, seq_len)
      log_mu: log probs from behavior policy, shape (batch, seq_len)
      advantages: sequence-level advantages, shape (batch,)
      mask: response mask, shape (batch, seq_len)
      c_pos: (c1, c2) for positive advantages
      c_neg: (c1, c2) for negative advantages
  """
  # Sequence-level IS ratio in log-space (true product, no length norm)
  log_ratio = log_pi - log_mu
  seq_log_w = (log_ratio * mask).sum(dim=-1)  # (batch,)
  W = exp(seq_log_w)

  # Asymmetric hyperparameter selection
  c1 = where(advantages >= 0, c_pos[0], c_neg[0])
  c2 = where(advantages >= 0, c_pos[1], c_neg[1])

  # VESPO kernel in log-space, normalized so phi(1) = 1
  log_phi = c2 + c1 * seq_log_w - c2 * W
  phi = exp(log_phi).detach()  # gradient scaling only

  # Policy gradient loss: grad = -phi * A * grad_log_pi
  loss = -phi.unsqueeze(-1) * advantages.unsqueeze(-1) * log_pi
  return aggregate(loss, mask)
\end{lstlisting}


  \section{Limitations and Future Directions}
  \label{app:limitations}

  \paragraph{Hyperparameter specification.}
  While the variational framework provides the kernel form $\phi(W) = W^{c_1}\exp(c_2(1-W))$, the specific values $(c_1, c_2)$ require user choice. Our analysis identifies $c_1 \geq 1$ as the principled inverse-temperature regime under which the variance bound (\cref{prop:variance-bound}) is non-vacuous, and we use $(c_1^+, c_2^+) = (2, 3)$ and $(c_1^-, c_2^-) = (3, 2)$ across all settings without retuning between math and code. Adaptive schemes---scheduling $(c_1, c_2)$ along training, automatic per-task search, or learning the hyperparameters from data---could further improve performance and remain an open direction.

  \paragraph{Reward modality scope.}
  Our experiments cover \emph{verifiable-reward} settings: math reasoning with verifier-based reward (Math-Verify) and code generation with execution-based reward (test pass/fail). Effectiveness on preference-based rewards (e.g., RLHF, RLAIF), shaped/dense rewards, or open-ended creative generation remains to be explored. We expect VESPO's smooth variance control to remain beneficial in these regimes since the underlying off-policy IS challenge is the same, but direct empirical validation is needed.

  \paragraph{Model scale.}
  Our largest evaluated model is Qwen3-30B-A3B-Base (30B total parameters with MoE routing). Scaling to substantially larger models ($>$100B) is not directly evaluated due to compute constraints. The observed monotonic trend---larger gains on the MoE Qwen3-30B-A3B-Base than on the dense Qwen3-8B-Base or Llama-3.2-3B-Instruct---suggests favorable scaling behavior, but direct verification at frontier scales remains future work.


\end{document}